\DeclareMathOperator*{\argmin}{arg\,min} 
\newtheorem{thm}{Theorem}[section]
\newtheorem{theorem}{Theorem}[section]
\newtheorem{definition}{Definition}
\newtheorem{lemma}[theorem]{Lemma}
\newcommand{\norm}[1]{\left\lVert#1\right\rVert}
\newcommand{\myparagraph}[1]{\vspace*{-2mm}\paragraph{ #1}}
\newcommand{\eps}{\epsilon}
\DeclareMathOperator{\Lap}{Lap}
\newcommand{\mb}{\mathbf}
\newcommand{\CASE}[1]{\STATE \textbf{case} #1\textbf{:} \begin{ALC@g}}
\newcommand{\ENDCASE}{\end{ALC@g}}
\newcommand{\DEFAULT}{\STATE \textbf{default:} \begin{ALC@g}}
\newcommand{\ENDDEFAULT}{\end{ALC@g}}
\newcommand{\DEFAULTLINE}[1]{\STATE \textbf{default:} }
\title{Continual and Sliding Window Release for Private Empirical Risk Minimization}
\author{
    Lauren Watson\textsuperscript{\rm 1},
    Abhirup Ghosh\textsuperscript{\rm 2},
    Benedek Rozemberczki\textsuperscript{\rm 1},
    Rik Sarkar\textsuperscript{\rm 1}
}
\begin{document}

\maketitle

\begin{abstract}
It is difficult to continually update private machine learning models with new data while  maintaining privacy. Data incur increasing privacy loss -- as measured by differential privacy -- when they are used in repeated computations. In this paper, we describe regularized empirical risk minimization algorithms that continually release models for a recent window of data. One version of the algorithm uses the entire data history to improve the model for the recent window. The second version uses a sliding window of constant size to improve the model, ensuring more relevant models in case of evolving data. The algorithms operate in the framework of stochastic gradient descent. We prove that even with releasing a model at each time-step over an infinite time horizon, the privacy cost of any data point is bounded by a constant  $\epsilon$ differential privacy, and the accuracy of the output models are close to optimal. Experiments on MNIST and Arxiv publications data show results consistent with the theory. 
\end{abstract}

\section{Introduction}

Differential privacy (often abbreviated as DP) is a rigorous mathematical definition that characterizes the privacy properties of algorithms. It measures the information that can be learned about an individual by observing the output of the algorithm. The definition is quite general, and since its introduction in~\citet{dwork2006calibrating}, differential privacy has become the standard privacy notion in many areas. It is currently used by major online service providers and organizations including the United States Census Bureau~\cite{abowd2018us}. 

Differentially private machine learning is commonly used to protect personal privacy in analytics of data such as in medical, social and online applications~\cite{kim2018privacy, task2012guide}. In these scenarios, data often evolve, and new machine learning models must be computed incorporating the new data. However, model recomputation poses a challenge -- it degrades privacy as more information is leaked along with the new model. In this paper, we consider the problem of releasing updated ML models while limiting privacy loss. 

Our aim is to release models based on the ``new'' data, while also leveraging historical information. There are multiple reasons for this approach. First, we do not wish training sets to be disjoint, but instead want {\em sliding window} models, e.g. those that are accurate on data for the {\em last $n$ days} -- reflecting recent trends in data. Second, the most recent block of data may be insufficient for training complex models, we wish to leverage historical data to enhance and stabilize the model. Unfortunately, reusing historical data in this context poses a challenge to maintaining differential privacy. 

Previous works on differential privacy for evolving datasets have considered questions of releasing a model for the entire cumulative data. In ~\citet{chan2011private,dwork2010differential}, specific simple statistics such as a count are released at constant intervals of time. The work in~\citet{cummings2018differential} can operate with more general queries such as linear histogram queries and empirical risk minimization (ERM), but publishes models at exponentially growing intervals. In contrast to these approaches, our priority is to maintain up-to-date private models with accuracy guarantees for ``recent data''. 

In our approach, models are published at constant time intervals, but can be applied to general computations such as linear queries over histograms and ERM;  we focus particularly on the important case of regularized ERM. At time $t$, the models make use of historical data -- either the entire historical data, or most recent $w$ items. The models are shown to maintain guaranteed accuracy on the most recent data, and good accuracy on the historic data where it is still relevant. We show that even with constant interval release over an unbounded time horizon, the privacy leak in our method is bounded by a constant $\eps$-DP.


\myparagraph{Our contributions.} Our objective is to periodically release models that are accurate on a recent {\em target} data window of size $b_0$, while making use of a larger {\em source} data window $w$. This problem can be viewed as domain adaptation or transfer learning (see~\citet{redko2020survey} for a survey on transfer learning). In a scenario unconcerned about privacy, simple transfer learning can be applied repeatedly to achieve these models; our contribution is an algorithm that ensures differential privacy through repeated data use in a sliding window. 

In the section \textit{Continual Cumulative Updates} we describe algorithms for the case where the entire historical dataset is used as the training window $w$. As more data accumulates, we maintain differentially private {\em base} models on the growing historical data. These base models are updated regularly to get source models, and at longer intervals they are recomputed from scratch. We prove bounds on the performance of the model on both the new and old data. We show that the privacy loss is bounded by $\eps$-DP even when data is reused over an infinite time horizon. 

The question of a fixed-size (or sliding-window) of historical data $w$ is considered in the \textit{Sliding window model release} section. This problem arises when the data distribution changes over time, so that very old data is no longer representative. Thus, our main challenge is to update models to ``forget'' old data. Our sliding window algorithm maintains a hierarchy  of updates to a base model that are removed and added appropriately with the sliding window, and show privacy and utility bounds as above. 

In the experimental results, we show the performance for logistic regression on two datasets -- MNIST and Arxiv publication records. Empirical results show that the proposed algorithms demonstrate the expected trade-offs between privacy and utility for varying levels of regularization and datasize. Accuracy of the private models approaches non-private performance for strong privacy ($\epsilon=1$) given appropriate levels of regularization and minimum update sizes. 

For the sake of simplicity, we have focused the discussion on a single data stream which contains both the source $w$ and target $b_0$. But in the usual transfer learning setup, these methods can be used for {\em private dynamic transfer learning} -- where $w$ is taken in the source domain which has more data, while $b_0$ is taken in a target domain which is sparser in data, and both datasets evolve with time. 

\section{Related Work}

Differentially private machine learning has been a major topic in recent years; see~\citet{ji2014differential} for a survey. The most prominent setting is the static database, with the objective of empirical risk minimization with differential privacy. Differentially private logistic regression was described in~\citet{chaudhuri2009privacy} and extended to general regularized empirical risk minimization by~\citet{chaudhuri2011differentially}. 
Further analyses of DP-ERM for strongly convex functions were described in~\citet{kifer2012private,wang2017} and Algorithm 3 of~\citet{bassily2014private}. 

Evolving databases have been considered in different forms.  Online learning is a setup where the evolving input and loss functions may be adversarial, and the objective is to minimize {\em regret}, which measures overall error. Differentially private versions of online learning have been studied in~\citet{jain2013differentially,guha2013nearly} and~\citet{agarwal2017price}. A recent work has considered batch online DP learning minimizing  regret and excess population risk~\cite{pmlrv139kairouz21b}. In contrast, our setup involves independently evolving data, a sliding wondow and the objective is to minimize the empirical error. 

{\em Continual release} of query answers on the entire dataset was described in \citet{dwork2010differential} and~\citet{chan2011private} for a restricted input type. They consider a single bit of input at every round (such as a single element possibly being added to a the set) and publish a differentially private count in every round. A question of adaptive analysis on evolving data was considered in \citet{cummings2018differential}. Building on previous works~\citep{blum2013learning,hardt2010multiplicative} on adaptive queries, \citet{cummings2018differential} describe releasing responses to a broader class of queries. Instead of every round, these results are published at exponentially growing intervals. Our algorithm instead provides updated algorithms are constant-sized ($b_0$) intervals, which is a significant step towards full continual release. Private matrix analysis in the sliding window model has been studied by \citet{upadhyay2020framework}. 

Transfer learning or domain adaptation has been studied from various perspectives (\cite{david2010impossibility,mansour2009domain}). A recent survey can be found in~\cite{redko2020survey}. A variant called {\em hypothesis transfer learning}~\citep{kuzborskij2013stability,mansour2008domain} is particularly relevant for us. In this variant, once a suitable hypothesis (model) on the source domain is computed, the source data is no longer accessed, and only the hypothesis (model) is used for learning in the target domain. 

\section{Preliminaries}

\textbf{Empirical risk minimization} Let $D=\{(\mathbf{x}_i, y_i) : i \in [1, n]\}$ represent a dataset of training examples drawn from some underlying distribution $\mathcal{D}\sim\mathcal{X} \times \mathcal{Y}$. Consider a set of candidate models $\mathcal{F}$ where $f: \mathcal{X} \rightarrow \mathcal{Y}$ for $f \in \mathcal{F}$ and $f$ is parameterized by weights $\mathbf{w}\in \mathcal{W}$ e.g. $\mathcal{W}=\mathbb{R}^d$. The loss of model $f\in\mathcal{F}$ for each data point is given by a loss function $\ell: \mathcal{Y} \times\mathcal{Y} \rightarrow \mathbb{R}$. Empirical risk minimization describes the learning paradigm of selecting the predictor $\hat{f}\in \mathcal{F}$, parameterized by weights $\mathbf{w}_{\hat{f}}$ such that, 
\begin{equation}
    \hat{f}=\argmin_{f\in\mathcal{F}} \hat{L}_D(\mathbf{w_f}) = \argmin_{f\in \mathcal{F}}  \frac{1}{n} \sum_{i=1}^n \ell(f(\mathbf{x_i}), y_i)
    \label{eq:erm}
\end{equation}
where $\hat{L}_D(\mathbf{w})$ represents the empirical risk; and the true risk is denoted by $L(\mathbf{w})=\mathbb{E}_{D\sim \mathcal{D}}[\ell(f(\mathbf{x_i}), y_i)]$. Regularized empirical risk minimization with regularization parameter $\lambda>0$ includes a penalization term in the loss function of the form $\ell(f(\mathbf{x_i}), y_i) \leftarrow \ell(f(\mathbf{x_i}), y_i) + \lambda \lVert\mathbf{w}_f\rVert_2^2$, ensuring the $\lambda$-strong convexity of the loss function.The following assumptions are commonly made about the loss function $\ell$:
\begin{definition}[Convexity]
  A function $f:\mathcal{X}\rightarrow\mathcal{S}$ satisfies $\lambda$-strong convexity if for all $x, y\in \mathcal{X}$:
\[
    f(x) \geq f(y) + \nabla f(x)^T(x-y) + \frac{\lambda}{2}\lVert x-y \rVert_2^2
\]
\end{definition}
\begin{definition}[Lipschitzness] 
  A function $f:\mathcal{X}\rightarrow\mathcal{S}$ is $L$-Lipschitz if for any $x, y\in \mathcal{X}$: $\lVert f(x)-f(y)\rVert_2\leq L\lVert x-y \rVert_2$
\end{definition}
\begin{definition}[smoothness]
  A differentiable function $f:\mathcal{X}\rightarrow\mathcal{S}$ is $\beta$-smooth if for ant $x, y\in \mathcal{X}$:
$\lVert \nabla f(x)- \nabla f(y)\rVert_2 \leq \beta\lVert x-y\rVert_2$


\end{definition}
Differential privacy is a rigorous mathematical definition of privacy for a randomized algorithm $A$. It is defined in terms of a pair of neighboring databases $(D, D')$: Two databases $D, D'$ are neighboring if $H(D, D') \leq 1$, where $H(\cdot, \cdot)$ represents the Hamming distance.  This notion corresponds to event-level privacy: where the presence or absence of single datapoints is obscured.

\begin{definition}[Differential privacy]
A randomized algorithm $A$ satisfies $\epsilon$-differential privacy if for all neighboring databases $D,D'$ and for all possible outputs $O\subseteq \text{Range}(A)$,
  \[ \Pr[A(D) \in O] \leq e^{\epsilon} \cdot \Pr[A(D') \in O].\]
\end{definition}

The L2 sensitivity of a function is the maximum change in the function value between neighboring databases: $\Delta f = \max_{D, D' \in \mathcal{D}} \lvert f(D)-f(D') \rvert_2$. The sensitivity is used to determine the noise added by $A$ to achieve differential privacy, e.g. the Laplace mechanism works as follows: 

\begin{definition}[Laplace mechanism]
Given any function $f:\mathcal{D}\rightarrow O$ and privacy parameter $\epsilon$, for any $D\in \mathcal{D}$, the Laplace Mechanism returns:
$f(D)+\nu,$ where $\nu \sim \Lap\left(\frac{\Delta f}{\epsilon}\right)$. 
\end{definition}
$\Lap(b)$ is the Laplace distribution with mean $0$ and scale $b$. It is known that this mechanism preserves differential privacy~\cite{dwork2006calibrating} (abbreviated as $\eps$-DP), and that if a sequence of randomized algorithms $A_i$ are applied on a dataset, each with $\eps_i$-DP, then the sequence satisfies $\left(\sum_i \eps_i\right)$-DP. More advanced composition theorems also exist~\cite{kairouz2015composition}. 



\subsection{Problem description}\label{sec:problem}
Suppose data points $(\mathbf{x}, y)\in \mathcal{X}\times \mathcal{Y}$ arrive in a continual manner and dataset $D=\{(\mathbf{x_i}, y_i) : i\in [0,t]\}$ represents all points that have arrived by time $t$. 
The dataset $D_{[t_1: t_2]} = \{(\mathbf{x_i}, y_i) : i \in [t_1, t_2], 0\leq t_1 \leq t_2 \leq t\}$ represents the set of data points that arrived in the time interval $[t_1, t_2]$, with $D_{[t_1: t_2]}\subseteq D_{[0:t]}$.

Our goal is to compute an ERM model for the most recent batch $b_0$ of data i.e. $D_{[t-b_0: t]}$. We assume that it is sufficient to release a model at constant intervals representing a datasize of at least $b_0$, that is at each $t$ that is a multiple of $b_0$. Further, we assume that a minimum block size $B\geq b_0$ is necessary to compute accurate models. We wish to use information from a historical data window $D_{[t-w: t]}$ of size $w>b_0$ to augment the training on $D_{[t-b_0: t]}$.

We consider two versions: cumulative continual updates (with window $ D_{[0, t]}$) and sliding window $D_{[t-w: t]}$ for some constant $w$. In both these cases, our objective is to obtain an accurate model for $D_{t-b_0, t}$, with an additional objective that the model should also have high accuracy for $D_{t-w, t}$. Note that this may not be possible if the distributions of the source window $W$ and target $b_0$ are significantly different. We also require a differential privacy guarantee protecting the presence or absence of a single data sample in the dataset (event-level privacy as per~\citet{chan2011private}). A constant differential privacy guarantee must be ensured in all cases.

We assume that data arrives at a constant rate. In scenarios where this is not the case, the results can be applied by defining dynamic time steps as the interval that accumulates a certain constant number of data points.

\subsection{Basic approaches and multi-resolution release}
\label{sec:basic}

Before describing the main results, we briefly discuss some basic and existing approaches that are relevant. 
Consider a simple scenario where in each time-step a single element may arrive. The sensitivity of the {\em count} function is $1$ -- as a single element changes it by $1$. In each time step $t$, the count can be released with a $\Lap{(1/\eps)}$ noise, this guarantees $\eps$-DP for the release at time $t$, however, data that arrives in early rounds suffer information leak in each round. By the composition properties of differential privacy, data that contributes to $T$ releases of the count, has $T\eps$-DP. To ensure $\eps$-DP for each element, the noise scale must grow with time, requiring $\Lap{(T/\eps)}$ noise at round $T$ -- which is excessive for most purposes. 


Certain important classes of functions have smaller sensitivity that yields more efficient algorithms. Linear queries over histograms was considered in~\cite{cummings2018differential,hardt2010multiplicative,blum2013learning} etc. Suppose $U$ is a finite data universe of size $N$, and $D\in U^n$ is a database. We write it as a histogram $x$, where $x^i$ as the fraction of $x$ of type $i\in [N]$, that is: $x^i=n_i/n$. A linear query is described by a vector $f\in [0,1]^N$, where the objective is to return $\langle f,x \rangle$. In this format, the presence or absence of a single element changes the histogram entry for the type by at most $1/n$, and thus has sensitivity of $1/n$. 

In a machine learning context, similar low sensitivity arises for $\lambda$-strongly convex loss functions and regularized ERM (Eq.\ref{eq:erm}). In this case, it can be shown that the sensitivity is bounded by $O\left(\frac{1}{\lambda n}\right)$~\citep{chaudhuri2011differentially}. 
This result is particularly significant, since many important machine learning methods, including regularized convex Stochastic Gradient Descent fit this mold~\citep{wu2017bolt}. An algorithm for private SGD is shown as Algorithm~\ref{alg:output}. For such a mechanism, where the sensitivity is $O(1/n)$, if one ERM model is released in every round with a $\Lap{(1/\eps)}$ noise, then the information leak is bounded by $\Theta((\log n)\eps)$-DP.

\begin{algorithm}
\caption{Private SGD via Output Perturbation (PSGD)~\cite{ wu2017bolt}}
\begin{algorithmic}[1]
  \scriptsize
  \STATE Input: $D=\{(\mathbf{x}_t, y_t) \}$, inverse learning rate $\gamma$, sensitivity $\Delta_{\epsilon}$, number of iterations $m$.
  \STATE  $\mathbf{w} \leftarrow SGD(D)$ with $k$ passes and learning rate $ \frac{1}{\gamma i}$ for iteration $i$.
  \RETURN $\mathbf{w}+\nu$ where $\nu\overset{d}{\sim} Lap\left(\Delta_{\epsilon}\right)$.
\end{algorithmic}
  \label{alg:output}
\end{algorithm}

\myparagraph{Hierarchical and multiresolution release approach.} As a warm-up, we describe a mechanism to release models with a constant $\eps$-differential privacy guarantee. This approach does not achieve the continual release property of producing an output at each round trained using all previous data, but releases these global outputs at exponentially growing intervals (such as in~\citet{cummings2018differential}).

In this approach, a query result (such as a model trained via as shown in Algorithm~\ref{alg:output}) is released at times $t=2^kB$ for $k=0,1,2,3,\dots$. This approach is capable of more than releasing the simple cumulative model. By using a binary hierarchy, we can release models at $\log t$ different scales of sizes $qB, q \in \mathbb{Z}^{+}$. The approach is simply the following: At any time $t=(2^k)q B$ (for $q, k \in \mathbb{Z}^{+}$), we release a model computed on data in the interval $[t-2^kB: t]$. Algorithm~\ref{alg:multires} shows the idea. 

\begin{algorithm}
\caption{Private Multi-Resolution Release}
\begin{algorithmic}[1]
  \scriptsize
  \STATE Input: $D$, $B$, $\Delta_{\eps}=\frac{4L}{\lambda B \epsilon}$ where $L$ is the Lipschitz constant of $\ell$.
   \FOR{$t=2^kqB$ for $q, k \in\mathbb{Z}^+$} 
        \STATE $\mathbf{w^t} \leftarrow PSGD(D=\{(\mathbf{x}_j, y_j) | j\in [t-2^kB+1, t]\}, \Delta_{\epsilon})$ (Algorithm~\ref{alg:output})
        \STATE Release $\mathbf{w^t}$
   \ENDFOR
\end{algorithmic}
  \label{alg:multires}
\end{algorithm}

\begin{figure}[h]
    \centering
    \includegraphics[width=0.85\linewidth]{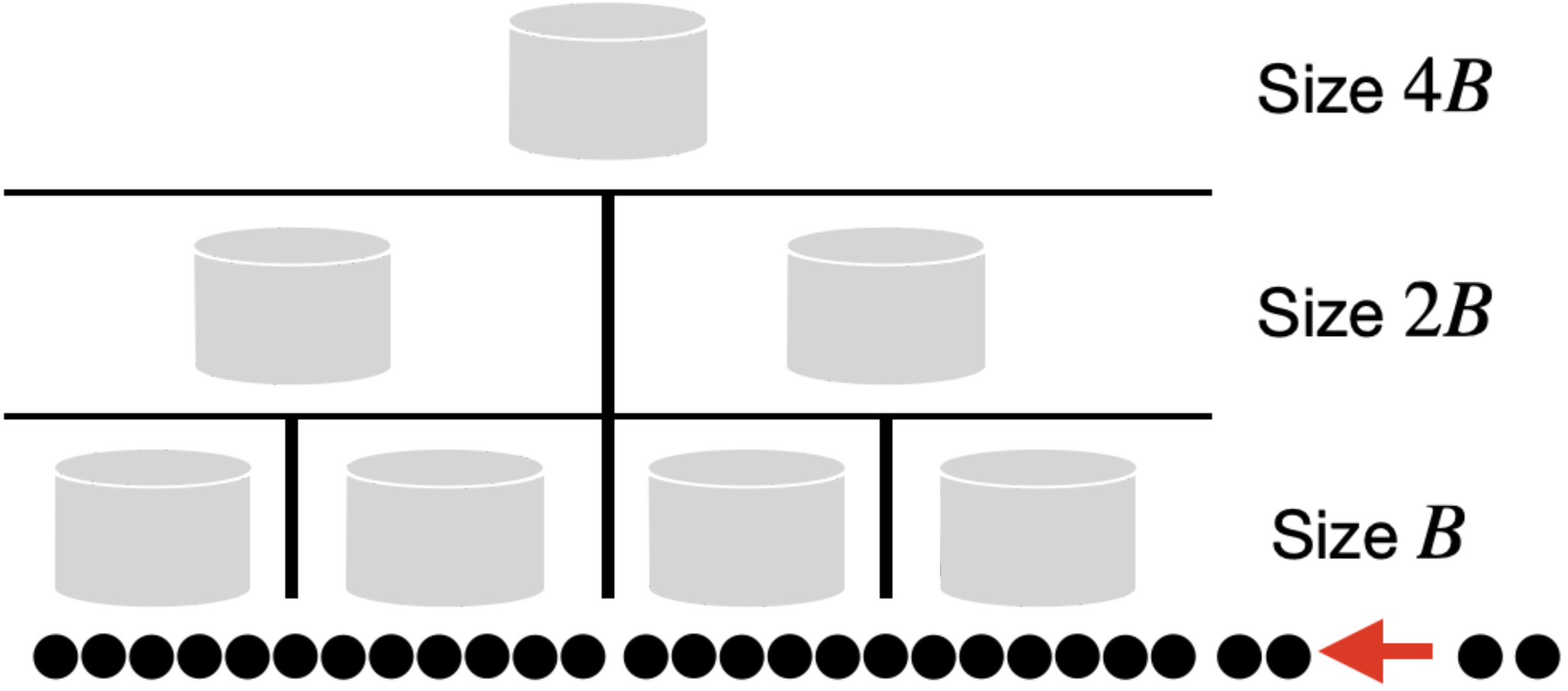}
    \caption{Incoming datapoints are partitioned by Algorithm~\ref{alg:multires} to form a hierarchy of models with datasizes $2^kB$. Tailored noise addition results in a constant privacy guarantee for any datapoint. }
    \label{fig:basicdiagram}
\end{figure}


\begin{thm} \label{thm:basicprivacy}
Algorithm~\ref{alg:multires} satisfies $\epsilon$-differential privacy.
\end{thm}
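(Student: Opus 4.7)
The plan is to bound the total privacy cost incurred by any single data point across the releases in which it participates, and then invoke basic sequential composition. The key structural observation I would isolate first is that, for each fixed scale $k$, the release windows $[t-2^kB+1,\,t]$ as $t$ ranges over multiples of $2^kB$ form a \emph{partition} of the time axis into disjoint blocks of length $2^kB$. Hence any individual data point falls in exactly one block at each scale, and therefore contributes to at most one released model per scale $k$, even over an unbounded time horizon.

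Next I would quantify the per-release privacy cost as a function of the scale. By the sensitivity bound for $\lambda$-strongly convex, $L$-Lipschitz regularized ERM cited in the preliminaries (the same bound used to analyze PSGD in Algorithm~\ref{alg:output}), the output of PSGD on a block of size $n_k = 2^kB$ has $\ell_2$-sensitivity at most $\tfrac{2L}{\lambda\,2^kB}$. Adding Laplace noise with scale $\Delta_\epsilon = \tfrac{4L}{\lambda B\epsilon}$ then yields an $\epsilon_k$-differentially private release, where
\[
\epsilon_k \;=\; \frac{\text{sensitivity at scale }k}{\Delta_\epsilon} \;=\; \frac{2L/(\lambda\,2^kB)}{4L/(\lambda B\epsilon)} \;=\; \frac{\epsilon}{2^{k+1}}.
\]

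Finally I would combine these per-scale guarantees. Because each data point is touched by at most one model per scale, basic composition bounds its cumulative privacy loss by
\[
\sum_{k=0}^{\infty}\epsilon_k \;=\; \sum_{k=0}^{\infty}\frac{\epsilon}{2^{k+1}} \;=\; \epsilon,
\]
establishing $\epsilon$-differential privacy for the entire (possibly infinite) sequence of releases of Algorithm~\ref{alg:multires}.

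The only subtle step, and the one I would be most careful about, is the structural accounting: making sure that the per-scale partition argument is airtight (so the geometric series truly terminates the analysis and no extra $\log T$ factor sneaks in), and that the asymmetric setting of $\Delta_\epsilon$ in terms of the \emph{base} block size $B$ rather than the \emph{window} size $2^kB$ is exactly what trades off against the $2^{-k}$ shrinkage of sensitivity. Once that bookkeeping is in place, the sensitivity-plus-composition calculation is routine.
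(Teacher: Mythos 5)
Your proposal is correct and follows essentially the same route as the paper's proof: a per-scale disjointness (partition) argument combined with the fact that the fixed noise scale $\Delta_\epsilon = \tfrac{4L}{\lambda B\epsilon}$ yields privacy cost $\tfrac{\epsilon}{2\cdot 2^k}$ at scale $k$ (since the ERM sensitivity shrinks as $\tfrac{1}{2^kB}$), after which the geometric series sums to $\epsilon$. The only cosmetic difference is that you compute the sensitivity ratio explicitly, whereas the paper invokes its Lemma on the privacy of Algorithm~1 at noise scale $\tfrac{2L}{\lambda B\epsilon}$ and rescales.
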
 

Let us now consider the utility achieved by Algorithm~\ref{alg:multires}. Denote the output of a non-private stochastic gradient descent algorithm by $\mathbf{w}$, this corresponds to line 2 in Algorithm~\ref{alg:output}. Let $\mathbf{w}^*$ represent the non-private minimizer $\mathbf{w}^*=\argmin_{\mathbf{w}}\hat{L}_D(\mathbf{w})$. Denote the corresponding private released parameters by $\mathbf{w}_{priv}$. In other words, $\mathbf{w}_{priv}$ is the privatized set of parameters returned in line $3$ of Algorithm~\ref{alg:output}. Theorem~\ref{thm:basicutility} describes the utility of Algorithm~\ref{alg:multires}.

\begin{thm}[\textbf{\cite{wu2017bolt}}] . 
Consider 1-pass private stochastic gradient descent via Algorithm~\ref{alg:output} for $\beta$-smooth and $L$-Lipschitz loss function $\ell$. Suppose $sup_{\mathbf{w}\in\mathcal{W}}\|\ell'(\mathbf{w})\|\leq G$, $\norm{\mathbf{x}}_1\leq1$, $\mathcal{W}$ has diameter $R$ and $\mathbf{w}\in\mathbb{R}^d$. Then, for a dataset of size $2^kB$ with $k\in[0, \lfloor log_2(\frac{t}{B})\rfloor]$
\begin{align*}
    \mathbb{E}[\hat{L}_D(\mathbf{w}_{priv})-\hat{L}_D(\mathbf{w}^*)]\leq \\
    \frac{((L+\beta R^2)+G^2)\log(2^kB)}{\lambda 2^kB}+\frac{4dG^2}{\epsilon \lambda B}
\end{align*}
    
\label{thm:basicutility}
\end{thm}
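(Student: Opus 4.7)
The plan is to decompose the expected excess empirical risk into a non-private optimization error and a noise-perturbation error, bound each separately, and sum the two. Concretely,
\[ \mathbb{E}[\hat{L}_D(\mathbf{w}_{priv}) - \hat{L}_D(\mathbf{w}^*)] = \mathbb{E}[\hat{L}_D(\mathbf{w}) - \hat{L}_D(\mathbf{w}^*)] + \mathbb{E}[\hat{L}_D(\mathbf{w}_{priv}) - \hat{L}_D(\mathbf{w})], \]
so it suffices to show that the first summand is at most $((L+\beta R^2)+G^2)\log(2^kB)/(\lambda 2^kB)$ and the second is at most $4dG^2/(\epsilon \lambda B)$.

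For the optimization term, I would invoke the standard convergence analysis of 1-pass SGD on a $\lambda$-strongly convex, $\beta$-smooth, bounded-gradient objective with the diminishing step size $1/(\lambda i)$ specified in Algorithm~\ref{alg:output}. Strong convexity yields a one-step contraction for the squared iterate distance, which one telescopes using the harmonic step schedule to obtain $\mathbb{E}[\norm{\mathbf{w}_n - \mathbf{w}^*}_2^2] = O(1/(\lambda n))$. Converting this iterate-distance bound into a function-value gap via $\beta$-smoothness (which supplies a quadratic upper surrogate around $\mathbf{w}^*$) is what introduces the $L+\beta R^2$ factor, with $R$ the diameter of $\mathcal{W}$; the stochastic gradient variance term contributes the $G^2$ piece, and the harmonic summation produces the $\log n$ factor. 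Substituting $n = 2^kB$ reproduces the first summand of the bound.

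For the perturbation term, I would exploit Lipschitzness of the loss in $\mathbf{w}$: since $\sup_\mathbf{w}\norm{\ell'(\mathbf{w})}\leq G$, the empirical risk $\hat{L}_D$ is $G$-Lipschitz, so $|\hat{L}_D(\mathbf{w}+\nu)-\hat{L}_D(\mathbf{w})| \leq G\norm{\nu}_1$. The noise $\nu$ is a product of $d$ independent $\Lap(\Delta_\epsilon)$ coordinates with $\Delta_\epsilon = 4L/(\lambda B\epsilon)$, so $\mathbb{E}[\norm{\nu}_1] = d\Delta_\epsilon = 4dL/(\lambda B\epsilon)$. Taking expectation therefore bounds the perturbation term by $4dGL/(\lambda B\epsilon)$, which matches the stated $4dG^2/(\epsilon \lambda B)$ once the two Lipschitz constants are identified (or $G$ is taken as an upper bound on $L$ consistent with the hypotheses).

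The main obstacle is threading the precise constants $(L+\beta R^2)+G^2$ and the $\log(2^kB)$ factor through the strongly convex SGD recursion: each ingredient — the contraction inequality, the harmonic step schedule, and the smoothness-to-function-value conversion — is standard, but matching the stated coefficients requires careful bookkeeping of the recursive expectation inequality and of the initialization error term involving $R$. A secondary subtlety worth highlighting is that the sensitivity $\Delta_\epsilon$ uses $B$ rather than $2^kB$ in its denominator; this is why the perturbation term depends only on $B$ and eventually dominates as $k$ grows, which in turn motivates the exponential-resolution hierarchy in Algorithm~\ref{alg:multires}.
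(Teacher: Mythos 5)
This theorem is imported from \citet{wu2017bolt} and the paper gives no proof of its own, so there is nothing to compare line by line; the closest internal analogues are the proofs of Theorems~\ref{thm:continual_utility} and~\ref{thm:sliding_utility }, which use exactly your decomposition (non-private suboptimality plus a perturbation term bounded by the Lipschitz constant times the norm of the Laplace noise, via Lemma~11 of \citet{wu2017bolt} and Lemma~\ref{lemma:noise}). Your outline is the standard and correct route: the first summand is the last-iterate SGD bound for $\lambda$-strongly convex objectives with $1/(\gamma i)$ steps, and the second follows from $\hat{L}_D(\mathbf{w}+\nu)-\hat{L}_D(\mathbf{w})\leq G\norm{\nu}$ with $\mathbb{E}\norm{\nu}_1 = d\Delta_\epsilon = 4dL/(\lambda B\epsilon)$, identifying $G$ with $L$ as you note. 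The only unverified piece is the exact constant $(L+\beta R^2)+G^2$ in the optimization term, which you correctly flag as bookkeeping internal to the cited SGD analysis rather than a conceptual gap.
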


While this produces a model based on the most recent data at different scales, this approach does not quite give us the continual or sliding window release we require, and instead releases results in windows of varying sizes. It also assumes a large minimum block size of $B$ that is sufficient for accurate training. It does not produce updated models for smaller blocks of size $b_0$ and in many cases does not make use of historical data.

\section{Continual cumulative updates}
\label{sec:continualrelease}
To release a model at constant intervals using the cumulative data, we adapt the previous hierarchical mechanism. Our approach is: at any time, we have a {\em base} model $\mb{w}_g$, which has been computed on an early sequence of data up to time $t_g$, in the regular offline manner, with added privacy (e.g. using Algorithm~\ref{alg:output}). As new data arrive, we update $\mb{w}_g$. The update itself proceeds in a hierarchical manner. After a suitable interval, e.g. when $t = 2t_g$, the process is reset, and the base model is recomputed on the entire dataset. Let us now consider the details of the scheme. 

Upon receiving $b_0$ items after $t_g$, i.e., at time $t = t_g + b_0$, the model $f^{t_g}$ is updated to $f^{t}$ using  the data $D_{[t_g+1: t]}$. This update itself is a regularized optimization: 

\begin{equation}\label{eq:regerm}
    f^t= \argmin_{\mb{w}\in\mathcal{W}} \frac{1}{b_0}\sum_{i\in[t_g+1, t]} \ell(f(\mathbf{x}_i),y_i) + \lambda \norm{\mathbf{w} - \mathbf{w}_g}_2^2
\end{equation}
This is solved via a private algorithm shown as  Algorithm~\ref{alg:pb-erm}, which can be realized as an SGD algorithm such as Algorithm~\ref{alg:output}. 

This optimization treats the base model $\mb{w}_g$ as a regularization point, or {\em origin}, and thus the output model $f^t$ stays close to this global model. Further updates at time $t^+ = t+b_0$ can be done with respect to $f^t$ as: 
    $f^{t^+}= \argmin_{\mb{w}\in\mathcal{W}} \frac{1}{b_0}\sum_{i\in[t+1,t+b_0]} \ell(f(\mathbf{x}_i),y_i) + \lambda \norm{\mathbf{w} - \mathbf{w}_t}_2^2$.
A new base model is computed at $t=2^kB$ $D_{[t-w: t]}$ (Algorithm~\ref{alg:privateHTL}).

\begin{algorithm}[H]
\caption{Private Biased Regularized ERM (PBERM)}
\begin{algorithmic}[1]
  \scriptsize
  \STATE Input: Base model $f_g$ with weights $\mathbf{w}_g$, $D_{[t_g+1: t]}$, $\lambda$, $\epsilon$, and $L$.
    \STATE{$\mathbf{w}^t=\argmin_{w\in\mathcal{W}} \frac{1}{b_0}\sum_{i\in[t_g+1, t]} \ell(f(\mathbf{x}_i),y_i) + \lambda \norm{\mathbf{w} - \mathbf{w}_g}^2$}
    \RETURN{$\mathbf{w}^t+\nu$ where $\nu\overset{d}{\sim} Lap\left(\frac{4L}{\lambda b_0 \epsilon}\right)$}
\end{algorithmic}
  \label{alg:privateERM}
  \label{alg:pb-erm}
\end{algorithm}


\begin{algorithm}[H]
\caption{Private Continual Release}
\begin{algorithmic}[1]
  \scriptsize
  \STATE Input: $D$, $\lambda$, $\epsilon$, $L$, $b_0$, $B$.
  \FOR{$t \in \mathbb{Z}^+$ and $t\ge B$}
        \IF {$t=2^{k}B$ for $k\in \mathbb{Z}^{+}$}
            \STATE{Learn base model $f_g$ on $D_{[0:t]}$ using Algorithm~\ref{alg:multires}.}
            \STATE{Release $f_g$ and save $f_c = f_g$ and $t_g = t$}
        \ELSIF{$t-t_g=i b_0$ for $i\in \mathbb{Z}^{+}$}
            \IF{$t - t_g=2^{j}b_0$ for $j\in \mathbb{Z}^{+}$}
                \STATE{Learn $f^t$ on $D_{[t_g: t]}$ and regularize with model $f_g$ using Algorithm~\ref{alg:pb-erm}}
                \STATE{Release $f^t$ and save $f_c = f^t$}
            \ELSE
                \STATE{Learn $f^t$ on $D_{[t-b_0: t]}$, regularize with $f_c$ using Algorithm~\ref{alg:pb-erm}}
                \STATE{Release $f^t$}
            \ENDIF
        \ENDIF
  \ENDFOR
\end{algorithmic}
  \label{alg:privateHTL}
\end{algorithm}

\begin{figure}[h]
    \centering
    \includegraphics[width=0.85\linewidth]{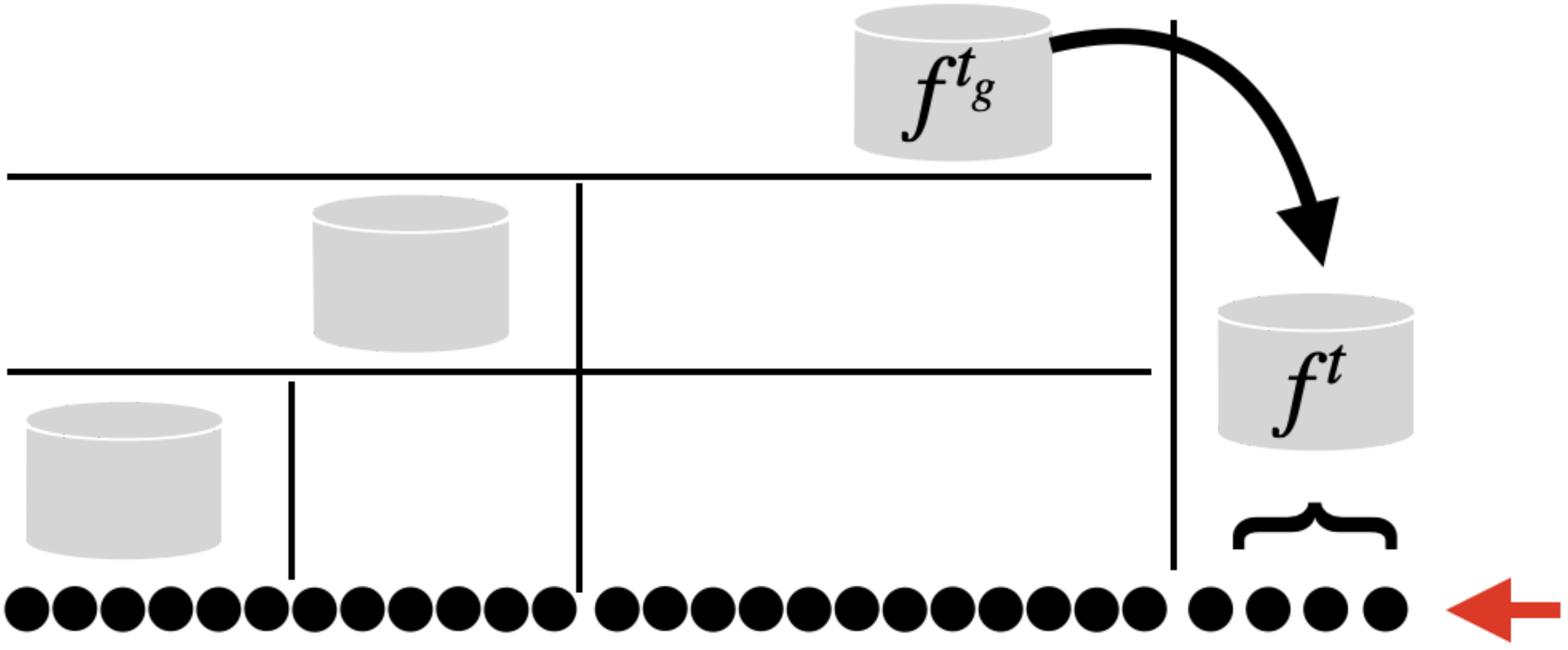}
    \caption{ Algorithm~\ref{alg:privateHTL} updates recent global models using newly arrived data. }
    \label{fig:basicdiagram}
\end{figure}

\myparagraph{Privacy and Utility Guarantees}

\begin{thm}[\citep{wu2017bolt}]\label{thm:continualpriv}
Algorithm~\ref{alg:privateERM} satisfies $\epsilon$-differential privacy.
\end{thm}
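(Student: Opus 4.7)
The plan is to reduce the claim to the standard Laplace-mechanism guarantee by bounding the L2 sensitivity of the weight vector $\mathbf{w}^t$ produced on line~2 of Algorithm~\ref{alg:pb-erm}, then observing that the added noise $\nu \sim \mathrm{Lap}(4L/(\lambda b_0 \epsilon))$ is calibrated as $\Delta/\epsilon$ for that sensitivity. Throughout the analysis, the base weights $\mathbf{w}_g$ should be treated as a fixed, data-independent input: they are learned from an earlier disjoint block of data, so altering a single example inside $D_{[t_g+1: t]}$ does not change $\mathbf{w}_g$. The regularizer $\lambda \|\mathbf{w}-\mathbf{w}_g\|_2^2$ therefore contributes a fixed positive-definite term to the Hessian regardless of $\mathbf{w}_g$, making the objective strongly convex with modulus $\lambda$ in the Chaudhuri-Monteleoni-Sarwate convention.

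For the sensitivity step I would follow the standard argument (Lemma~7 of~\citet{chaudhuri2011differentially}, adapted to the SGD-based realization in~\citet{wu2017bolt}). Let $D$, $D'$ be neighbors differing only in one example inside $D_{[t_g+1:t]}$, and let $F_D$, $F_{D'}$ be the corresponding regularized objectives. Their difference $g = F_{D'} - F_D$ is a single per-example loss scaled by $1/b_0$, so by $L$-Lipschitzness $\|\nabla g(\mathbf{w})\|_2 \leq 2L/b_0$ uniformly in $\mathbf{w}$. Combining this with strong convexity at the minimizers $\mathbf{w}^{\ast}_D$, $\mathbf{w}^{\ast}_{D'}$, using $\nabla F_D(\mathbf{w}^{\ast}_D) = \nabla F_{D'}(\mathbf{w}^{\ast}_{D'}) = 0$ and first-order growth, yields $\|\mathbf{w}^{\ast}_D - \mathbf{w}^{\ast}_{D'}\|_2 \leq cL/(\lambda b_0)$ for a small constant $c$. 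The specific bound I need is $\Delta \leq 4L/(\lambda b_0)$, which is the bound derived in~\citet{wu2017bolt} for the multi-pass SGD realization of line~2: the extra slack absorbs both the strong-convexity constant convention and the approximation of the exact minimizer by the finite-iteration SGD iterate actually released.

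With sensitivity $\Delta \leq 4L/(\lambda b_0)$ in hand, the claim follows from the standard proof that adding $\mathrm{Lap}(\Delta/\epsilon)$ noise to a $\Delta$-sensitive function yields $\epsilon$-differential privacy. The main obstacle is reconciling the constant factor: different treatments (one-pass vs.\ multi-pass SGD, $\lambda\|\cdot\|^2$ vs.\ $\tfrac{\lambda}{2}\|\cdot\|^2$ as the regularizer, exact minimizer vs.\ SGD iterate) give slightly different constants, and care is needed to verify that the chosen noise scale $4L/(\lambda b_0 \epsilon)$ dominates the actual sensitivity of whichever optimization subroutine is plugged into line~2. Once that constant is pinned down against the subroutine used, invoking the Laplace mechanism closes the argument in one line.
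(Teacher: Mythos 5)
Your proposal is correct and is essentially the argument the paper relies on: the paper does not prove this theorem itself but cites \citet{wu2017bolt}, whose guarantee is exactly the sensitivity-of-the-regularized-minimizer bound (via $L$-Lipschitzness of the per-example loss and $\lambda$-strong convexity of the biased objective, with $\mathbf{w}_g$ treated as a fixed regularization point) followed by the Laplace mechanism, with the noise scale $4L/(\lambda b_0 \epsilon)$ chosen with enough slack to dominate the sensitivity under either constant convention. Your attention to the exact-minimizer-versus-SGD-iterate distinction and to the data-independence of $\mathbf{w}_g$ matches how the paper uses the result downstream (e.g.\ it invokes the stronger $\epsilon/2$ guarantee in the proof of Theorem~\ref{thm:continualpriv2}).
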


Theorem~\ref{thm:continualpriv2} demonstrates that Algorithm~\ref{alg:privateHTL} allows the continual release of models over the incoming dataset with only a constant increase in privacy loss. When used together, the total privacy loss of Algorithms~\ref{alg:multires} and~\ref{alg:privateHTL} is then at most $2\epsilon$.

\begin{thm}\label{thm:continualpriv2}
Algorithm~\ref{alg:privateHTL} satisfies $\epsilon$-differential privacy.
\end{thm}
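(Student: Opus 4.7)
The plan is to track, for an arbitrary fixed data point $(\mathbf{x}_i, y_i)$, the total privacy loss incurred across all releases that Algorithm~\ref{alg:privateHTL} may make over the entire (unbounded) time horizon, and then appeal to composition.

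My first step would be to partition the releases into three categories according to the algorithm's branching: (i) base-model releases produced by the inner call to Algorithm~\ref{alg:multires} at times $t = 2^{k}B$; (ii) \emph{hierarchical} update releases made when $t - t_g = 2^{j}b_0$, which train on the growing block $D_{[t_g+1:t]}$ and regularize against the current base $\mathbf{w}_g$; and (iii) \emph{regular} update releases at non-power-of-two multiples of $b_0$, which train only on the last block $D_{[t-b_0+1:t]}$ and regularize against the most recently saved model $\mathbf{w}_c$. For each category I would precisely identify the subset of releases in which the fixed data point actually enters the training set.

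Next, I would bound the privacy loss attributable to each category. For (i), I can invoke Theorem~\ref{thm:basicprivacy} directly: the entire (possibly infinite) family of base recomputations forms one run of Algorithm~\ref{alg:multires}, which is already $\epsilon$-DP; once $\mathbf{w}_g$ has been released, its reuse as a regularization origin in later updates is post-processing and incurs no additional loss. For (iii), I would observe that the regular-update windows are pairwise disjoint — the algorithm advances by exactly $b_0$ per step within a segment, and power-of-two positions are handled by branch (ii) instead — so any single data point appears in at most one regular-update training set; combined with the $\epsilon$-DP guarantee of Algorithm~\ref{alg:pb-erm} (Theorem~\ref{thm:continualpriv}), parallel composition caps this contribution at $\epsilon$. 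The subtle category is (ii): by the Chaudhuri--Monteleoni--Sarwate sensitivity bound, $\lambda$-strongly-convex regularized ERM on $n$ points has $L_2$-sensitivity $O(L/(\lambda n))$, while Algorithm~\ref{alg:pb-erm} always injects Laplace noise of scale $4L/(\lambda b_0 \epsilon)$; thus the hierarchical release built from $2^{j} b_0$ points is individually $(\epsilon/2^{j})$-DP, and a point sitting in the $k$-th block enters only those hierarchical releases with $2^{j} \ge k$, giving a geometric-tail contribution of at most $\sum_{j \ge \lceil \log_2 k \rceil} \epsilon/2^{j} \le 2\epsilon/k$ over the segment.

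Finally I would combine the three contributions by sequential composition and rescale the $\epsilon$ inputs fed to the subroutines so that the outer bound lands at the stated $\epsilon$. The step I expect to be the main obstacle is the accounting in category (ii): showing that the geometric series converges \emph{uniformly} in the time horizon (so the per-point leak does not drift upward as more segments open), and verifying that no block is double-charged by both the hierarchical and regular branches within the same segment. Once that bookkeeping is in place, the claim reduces to the standard Laplace-mechanism calculation applied blockwise.
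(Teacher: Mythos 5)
Your proposal is correct and follows essentially the same route as the paper's proof: per-block accounting under the constant noise scale $4L/(\lambda b_0\epsilon)$, so that each hierarchical update on a block of size $2^jb_0$ costs only $\epsilon/(2\cdot 2^j)$ and the sequence telescopes to a geometric series, while the disjoint size-$b_0$ updates charge any point at most once and the reuse of the base model is post-processing charged to Algorithm~2. If anything your bookkeeping is slightly more careful than the paper's (which presents the ``single update'' and ``hierarchical sequence'' cases as if mutually exclusive, although most blocks incur both), and the rescaling you anticipate is unneeded once you use the exact per-release cost $\epsilon/(2\cdot2^j)$ rather than $\epsilon/2^j$: the worst block then accrues $\epsilon/2+\epsilon/4+\epsilon/8+\cdots\le\epsilon$.
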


Theorem~\ref{thm:continual_utility} bounds the excess empirical risk of the continual cumulative models released via Algorithm~\ref{alg:privateHTL}. We denote the model released by $\mathbf{w}_{new}$ and the new data by $D^{new}\sim\mathcal{D}^{new}$. The risk for the true minimizer $\mathbf{w}^*=\argmin_{\mathbf{w}}\mathbb{E}_{D^{new}\sim\mathcal{D}^{new}}[\ell(f(\mathbf{x}_i), y_i)]$ is given by $L(\mathbf{w}^*)$ . Suppose in one particular iteration, we start with a current model $\mb{w}_g$ and after update obtain a model $\mb{w}_{new}$. Also let us write the expected error of $\mathbf{w}_g$ on the new block of data of size $2^jb_0$ as $R_{g}$ and assume that the new update block of data represents an I.I.D. sample from some underlying distribution. Note we do not assume that the entire dataset is an I.I.D. sample from the a distribution, only that the most recent block of data is.  Theorem~\ref{thm:continual_utility} demonstrates that, as can be expected, the excess error is low when $b_0$ is large, and when $R_g$ is small -- that is, when the new data set is large and when the old model is a good fit. The error increases additively as $1/\eps$.

\begin{thm}\label{thm:continual_utility}
Suppose the loss function used in Algorithm~\ref{alg:privateERM} is $L$-Lipschitz and that $|\ell|_\infty\leq M$. $\lambda\leq\frac{1}{\norm{\mathbf{w}^*-\mathbf{w}_g}_2^2 b_0}$ with $\mathbf{w}\in\mathbb{R}^d$ and update batch data size $2^jb_0$, then with probability at least $1-e^{-\eta}$: 
\begin{align*}
    \hat{L}_{D_{new}}(\mathbf{w}_{new})-L(\mathbf{w}^*) \leq\\
    \sqrt{\frac{2\eta R_{g}}{2^jb_0}}+ \frac{ 1.5M \eta+1}{2^jb_0}+\ln\left(\frac{d}{e^{-\eta}}\right)\frac{4dL^2}{\lambda b_0 \epsilon}
\end{align*}
\end{thm}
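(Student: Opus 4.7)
The plan is to decompose the excess empirical risk of $\mb{w}_{new}$ against $L(\mb{w}^*)$ into three manageable pieces corresponding to (i) the added privacy noise, (ii) the regularization bias introduced by anchoring the update to $\mb{w}_g$, and (iii) the stochastic fluctuation of the empirical risk around the true risk. Letting $\mb{w}^t$ denote the non-private minimizer of the biased regularized objective solved inside Algorithm~\ref{alg:pb-erm}, so that $\mb{w}_{new} = \mb{w}^t + \nu$ with $\nu$ a $d$-dimensional vector of independent $\Lap(4L/(\lambda b_0 \epsilon))$ coordinates, I would write
\begin{align*}
\hat{L}_{D_{new}}(\mb{w}_{new}) - L(\mb{w}^*)
&= \bigl[\hat{L}_{D_{new}}(\mb{w}_{new}) - \hat{L}_{D_{new}}(\mb{w}^t)\bigr] \\
&\quad + \bigl[\hat{L}_{D_{new}}(\mb{w}^t) - \hat{L}_{D_{new}}(\mb{w}^*)\bigr] \\
&\quad + \bigl[\hat{L}_{D_{new}}(\mb{w}^*) - L(\mb{w}^*)\bigr],
\end{align*}
and bound each bracket separately before taking a union bound over the two high-probability events that arise.

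First I would handle the privacy noise term via $L$-Lipschitzness: $|\hat{L}_{D_{new}}(\mb{w}_{new}) - \hat{L}_{D_{new}}(\mb{w}^t)| \leq L\,\norm{\nu}$. The coordinate-wise Laplace tail bound $\Pr[|\nu_i| > (4L/(\lambda b_0 \epsilon))\,t] = e^{-t}$ together with a union bound over the $d$ coordinates gives $\norm{\nu}_\infty \leq (4L/(\lambda b_0 \epsilon)) \ln(d/e^{-\eta})$ with probability at least $1-e^{-\eta}$; converting from $\ell_\infty$ to the norm required by the Lipschitz constant introduces an extra factor $d$ which delivers exactly the privacy term $\ln(d/e^{-\eta}) \cdot 4dL^2/(\lambda b_0 \epsilon)$ in the stated bound.

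Next, I would exploit the optimality of $\mb{w}^t$ for the regularized objective in Equation~(\ref{eq:regerm}), evaluated at $\mb{w}^*$:
\begin{equation*}
\hat{L}_{D_{new}}(\mb{w}^t) + \lambda\norm{\mb{w}^t - \mb{w}_g}_2^2 \leq \hat{L}_{D_{new}}(\mb{w}^*) + \lambda\norm{\mb{w}^* - \mb{w}_g}_2^2.
\end{equation*}
Dropping the non-negative term on the left and using the hypothesis $\lambda \leq 1/(\norm{\mb{w}^* - \mb{w}_g}_2^2\, b_0)$ yields $\hat{L}_{D_{new}}(\mb{w}^t) - \hat{L}_{D_{new}}(\mb{w}^*) \leq 1/(2^j b_0)$, which is the additive ``$+1$'' piece of the middle term in the bound. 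For the stochastic fluctuation I apply Bernstein's inequality to the i.i.d.\ batch $D_{new}$ of size $2^j b_0$; since $\mb{w}^*$ is independent of the sample,
\begin{equation*}
\hat{L}_{D_{new}}(\mb{w}^*) - L(\mb{w}^*) \leq \sqrt{\frac{2\,\mathrm{Var}(\ell(\mb{w}^*))\,\eta}{2^j b_0}} + \frac{M\eta}{2^j b_0}
\end{equation*}
holds with probability at least $1-e^{-\eta}$, and a union bound combines the two high-probability statements.

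The main obstacle is the last step: cleanly connecting the variance of $\ell(\mb{w}^*)$ to $R_g$, the average error of the base model on the new block, which is where the transfer-learning intuition enters. Using $0\leq \ell \leq M$, we have $\mathrm{Var}(\ell(\mb{w}^*)) \leq M\,\mathbb{E}[\ell(\mb{w}^*)] = M\,L(\mb{w}^*)$, and because $\mb{w}^*$ minimizes the true risk, $L(\mb{w}^*) \leq L(\mb{w}_g) = R_g$. Substituting yields the $\sqrt{2\eta R_g/(2^j b_0)}$ term (with the $M$ factor absorbed into the scaling of $\ell$, or else reported separately as part of the $M\eta/(2^j b_0)$ piece). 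The remaining work is bookkeeping to land on the stated constants ($1.5$ and the ``$+1$''), which follows by carefully tracking how the $\lambda$ hypothesis and Bernstein's constants interact, and by rescaling $\eta$ to absorb the union bound.
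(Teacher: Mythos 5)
Your proposal follows essentially the same route as the paper's proof: the identical three-term decomposition, the Lipschitz-plus-Laplace-tail bound for the noise term, the first-order optimality of the biased regularized minimizer combined with the hypothesis on $\lambda$ for the bias term, and Bernstein's inequality with $L(\mathbf{w}^*)\leq R_g$ for the fluctuation term. The only loose ends (justifying the $2^jb_0$ versus $b_0$ denominators and absorbing the variance factor $M$) are handled with the same level of informality as in the paper itself, so there is nothing substantive to flag.
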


The proof of this theorem is based their  non-private analogs in the hypothesis transfer learning literature~\cite{kuzborskij2013stability}. Theorem~\ref{thm:continual_lastutility} describes the utility of the updated model on the older data. Suppose $\mathbf{w}_g$ was obtained via empirical risk minimization on dataset $D_g \sim \mathcal{D}_g$, with empirical risk $\hat{L}_{D_g}(\mathbf{w}_{g})$. The true minimizer for this problem is denoted by $\mathbf{w}_g^*=\argmin_{\mathbf{w}\in\cal{W}}\mathbb{E}_{D\sim \mathcal{D}_g}[\ell(\mathbf{w}, (x_i,y_i))]$ with  $L(\mathbf{w}_g^*)=\mathbb{E}_{D\sim \mathcal{D}_g}[\ell(\mathbf{w}_g^*, (x_i,y_i))]$. 



\begin{thm}\label{thm:continual_lastutility}. Suppose the loss function used in Algorithm~\ref{alg:privateHTL} is $L$-lipschitz and $\lambda$-strongly convex, $\lambda\leq\frac{1}{\norm{\mathbf{w}_{new}^*-\mathbf{w}_g}^2 b_0}$ with $\mathbf{w}\in\mathbb{R}^d$. Then with probability at least $1-e^{-\eta}$
\begin{align*}
|\hat{L}_{D_g}(\mathbf{w}_{new})-L(\mathbf{w}_g^*)|-|\hat{L}_{D_g}(\mathbf{w}_{g})-L(\mathbf{w}_g^*)| \\
\leq L\norm{\mathbf{w}^*-\mathbf{w}_g}\left(\sqrt{2\left(\ln\left(\frac{d}{e^{-\eta}}\right)\frac{2dL^2}{\lambda\epsilon}+1\right)}+1\right).    
\end{align*}

\end{thm}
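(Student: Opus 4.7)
The plan is to reduce the two-sided difference on the left to a norm distance $L\|\mathbf{w}_{new}-\mathbf{w}_g\|_2$ via Lipschitz continuity, then split $\mathbf{w}_{new}=\mathbf{w}^t+\nu$, where $\mathbf{w}^t$ is the noiseless minimizer of the biased regularized objective in Algorithm~\ref{alg:pb-erm} and $\nu\sim\Lap(4L/(\lambda b_0\epsilon))$ is the output perturbation, and bound the two summands separately. Concretely, first I would apply the reverse triangle inequality to $|\hat{L}_{D_g}(\cdot)-L(\mathbf{w}_g^*)|$ to get
\[
|\hat{L}_{D_g}(\mathbf{w}_{new})-L(\mathbf{w}_g^*)|-|\hat{L}_{D_g}(\mathbf{w}_g)-L(\mathbf{w}_g^*)| \leq |\hat{L}_{D_g}(\mathbf{w}_{new})-\hat{L}_{D_g}(\mathbf{w}_g)|,
\]
and then invoke the $L$-Lipschitzness of $\ell$ to upper-bound the right side by $L\|\mathbf{w}_{new}-\mathbf{w}_g\|_2\leq L\|\mathbf{w}^t-\mathbf{w}_g\|_2+L\|\nu\|_2$.

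For the deterministic drift $\|\mathbf{w}^t-\mathbf{w}_g\|_2$, I would use the standard biased-regularization stability argument that underlies the HTL analysis of \citet{kuzborskij2013stability}: comparing the biased objective at $\mathbf{w}^t$ and at $\mathbf{w}^*$ and using optimality of $\mathbf{w}^t$ yields $\lambda\|\mathbf{w}^t-\mathbf{w}_g\|_2^2\leq \lambda\|\mathbf{w}^*-\mathbf{w}_g\|_2^2+(\hat{L}_{D_{new}}(\mathbf{w}^*)-\hat{L}_{D_{new}}(\mathbf{w}^t))$, from which dropping the (essentially non-positive) empirical-risk difference gives $\|\mathbf{w}^t-\mathbf{w}_g\|_2\leq \|\mathbf{w}^*-\mathbf{w}_g\|_2$. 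This accounts for the outer ``$+1$'' in the stated bound. For $\|\nu\|_2$, I would apply a sub-exponential concentration bound on $\sum_i \nu_i^2$ to obtain, with probability at least $1-e^{-\eta}$, $\|\nu\|_2^2\leq 2\ln(d/e^{-\eta})\cdot d\cdot(4L/(\lambda b_0\epsilon))^2$ (plus a lower-order constant absorbed by the inner ``$+1$''). The hypothesis $\lambda\leq 1/(\|\mathbf{w}_{new}^*-\mathbf{w}_g\|_2^2\,b_0)$ now lets me trade one factor of $1/(\lambda b_0)$ for $\|\mathbf{w}^*-\mathbf{w}_g\|_2^2$, rewriting the noise bound as $\|\nu\|_2\leq \|\mathbf{w}^*-\mathbf{w}_g\|_2\sqrt{2(\ln(d/e^{-\eta})\cdot 2dL^2/(\lambda\epsilon)+1)}$. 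Summing the two contributions and factoring out $L\|\mathbf{w}^*-\mathbf{w}_g\|_2$ gives the claimed inequality.

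The hard part will be the noise step: one must select the right Laplace tail bound so that the deviation scales as $\sqrt{d\ln(d/e^{-\eta})}$ (via sub-exponential concentration of the sum of squared Laplace coordinates) rather than the weaker $O(d\ln d)$ bound that would come from treating $\|\nu\|_2$ as a Gamma random variable, and then apply the condition on $\lambda$ \emph{surgically}, consuming exactly one factor of $1/(\lambda b_0)$ to produce the $\|\mathbf{w}^*-\mathbf{w}_g\|_2^2$ factor while leaving the other as $1/(\lambda\epsilon)$. The reduction in Step 1 and the drift bound in Step 2 are then straightforward consequences of Lipschitz continuity and (strong) convexity of the biased regularized empirical objective.
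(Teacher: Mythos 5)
Your opening reduction---reverse triangle inequality followed by $L$-Lipschitzness to reach $L\norm{\mathbf{w}_{new}-\mathbf{w}_g}$---is exactly the paper's first step. After that you diverge. The paper splits $\norm{\mathbf{w}_{new}-\mathbf{w}_g}\le\norm{\mathbf{w}_{new}-\mathbf{w}^*_{new}}+\norm{\mathbf{w}^*_{new}-\mathbf{w}_g}$ and controls the first term through $\lambda$-strong convexity, $\norm{\mathbf{w}_{new}-\mathbf{w}^*_{new}}\le\sqrt{\tfrac{2}{\lambda}\bigl(\hat{L}(\mathbf{w}_{new})-\hat{L}(\mathbf{w}^*_{new})\bigr)}$, into which it plugs the excess-empirical-risk bound already derived in the proof of Theorem~\ref{thm:continual_utility} (the noise term $\ln(d/e^{-\eta})\tfrac{4dL^2}{\lambda b_0\epsilon}$ plus the bias term $\tfrac{1}{b_0}$), and only then trades $\tfrac{1}{\lambda b_0}$ for $\norm{\mathbf{w}^*_{new}-\mathbf{w}_g}^2$. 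You instead split $\mathbf{w}_{new}=\mathbf{w}^t+\nu$ and convert the noise to a parameter distance directly. That changes the arithmetic in a way that matters: the paper's noise contribution enters as $\sqrt{\tfrac{2}{\lambda}\cdot L\norm{\nu}}$, i.e.\ scales like $\tfrac{1}{\lambda}\cdot\tfrac{1}{\sqrt{b_0\epsilon}}$, whereas yours enters as $\norm{\nu}$ itself, scaling like $\tfrac{1}{\lambda b_0\epsilon}$. After the $\lambda$-trade your expression retains an extra factor of order $\tfrac{1}{b_0\epsilon}$ inside the square root and does not reduce to the stated form $\sqrt{2\bigl(\ln(d/e^{-\eta})\tfrac{2dL^2}{\lambda\epsilon}+1\bigr)}$; your route, even if completed, proves a different (not uniformly comparable) bound.

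The more substantive gap is your drift step. Optimality of $\mathbf{w}^t$ for the biased objective gives $\lambda\norm{\mathbf{w}^t-\mathbf{w}_g}^2\le\lambda\norm{\mathbf{w}^*-\mathbf{w}_g}^2+\bigl(\hat{L}_{D_{new}}(\mathbf{w}^*)-\hat{L}_{D_{new}}(\mathbf{w}^t)\bigr)$, but the bracketed difference is not ``essentially non-positive'': $\mathbf{w}^*$ is the population minimizer, not the empirical one, while $\mathbf{w}^t$ partially optimizes $\hat{L}_{D_{new}}$, so generically $\hat{L}_{D_{new}}(\mathbf{w}^*)\ge\hat{L}_{D_{new}}(\mathbf{w}^t)$ and the term you want to drop has the wrong sign. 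The standard biased-regularization stability argument only yields $\norm{\mathbf{w}^t-\mathbf{w}_g}\le\norm{\mathbf{w}^*-\mathbf{w}_g}+\sqrt{\hat{L}_{D_{new}}(\mathbf{w}^*)/\lambda}$, and the extra term has to be controlled (e.g.\ via the Bernstein step used in the proof of Theorem~\ref{thm:continual_utility}) rather than discarded, so your outer ``$+1$'' is not justified as written. One further caveat you share with the paper rather than introduce: under the stated hypothesis $\lambda\le\tfrac{1}{\norm{\mathbf{w}^*_{new}-\mathbf{w}_g}^2 b_0}$ one only gets $\norm{\mathbf{w}^*_{new}-\mathbf{w}_g}^2\le\tfrac{1}{\lambda b_0}$, which is the wrong direction for replacing $\tfrac{1}{\lambda b_0}$ by $\norm{\mathbf{w}^*_{new}-\mathbf{w}_g}^2$ in an upper bound; the paper's proof silently takes equality there, and your ``surgical'' trade relies on the same slippage.
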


Theorem~\ref{thm:continual_lastutility} implies that when $\mathbf{w}_g$ is close to optimal model for the new data $\mathbf{w}_{new}^*$, then the excess empirical error on the old data using the updated model is small. Thus, when the data distribution does not change significantly, the model remains valid on the old data. In such cases, increasing the regularization weight $\lambda$ helps both models. In scenarios where the data distribution changes rapidly, a smaller $\lambda$ is appropriate, which naturally increases the error of $w_g$. However, in cases where the distribution evolves, a sliding window of source data will be more appropriate. 
\section{Sliding window model release}

Evolving data may drift away from its original structure, as the underlying reality and generating distribution slowly changes. In such cases, instead of using the entire cumulative data, which may no longer be relevant for up to date models, we would like to use a sliding window of sufficient size $w$ to generate a source model that can be applied to the most recent $w_0$ block. Here we use $w_0 \sim b_0$ as the unit of model learning. The challenge in achieving this effect is to update model corresponding to $w$ while maintaining bounded differential privacy. In this section, we develop an algorithm to maintain a model over a sliding window $w$. This model can then be used as the source to create a model for the most recent block as in the previous section. 
\begin{figure*}[h]\vspace*{-1mm}
\centering
\begin{tabular}{c|c}
\includegraphics[width=2.9in]{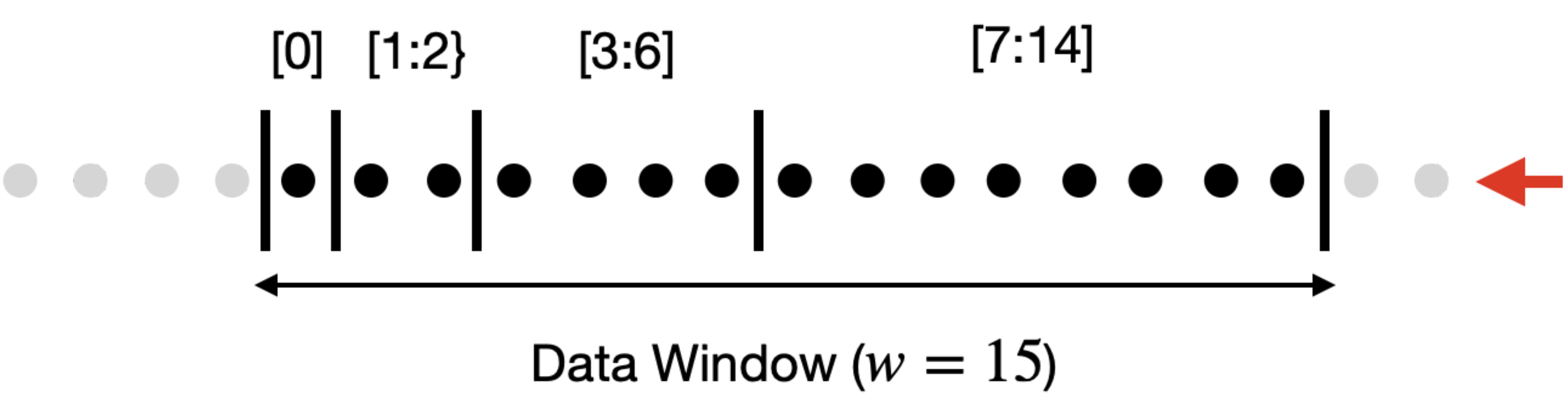} 
& \includegraphics[width=2.9in]{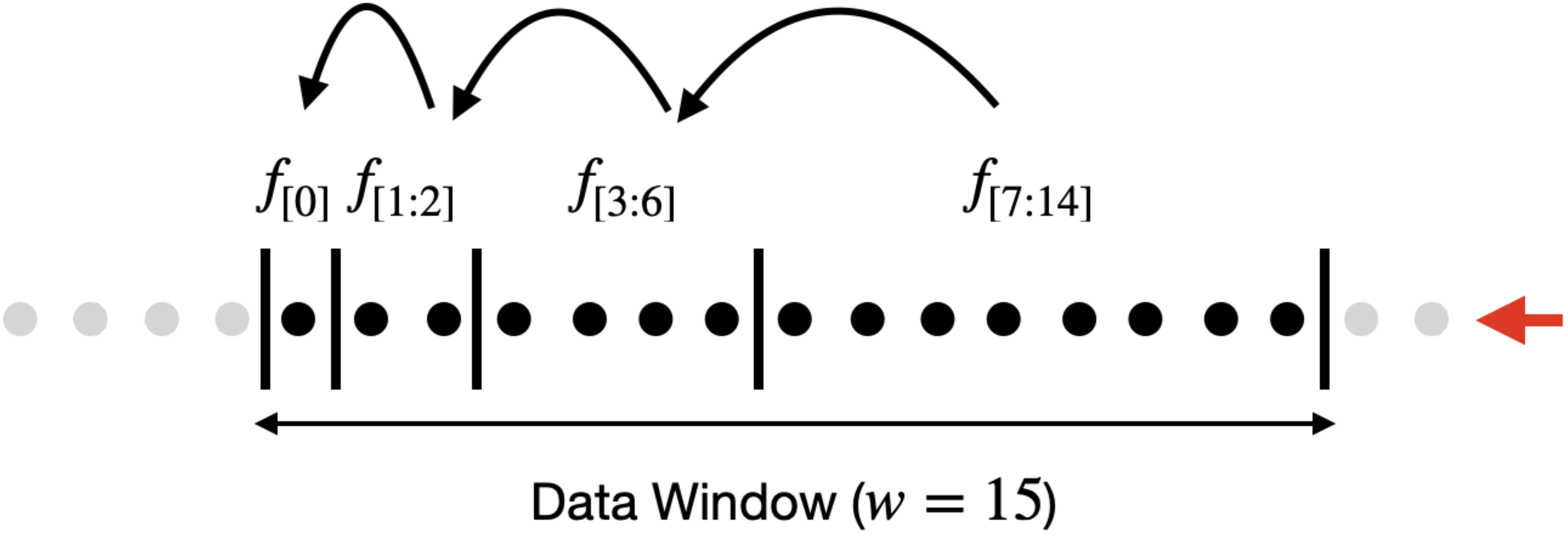} \\
(a) Sliding window size $w$, $w_0=1$. & (b) Cascade of fine tuning of models. \\
\includegraphics[width=2.9in]{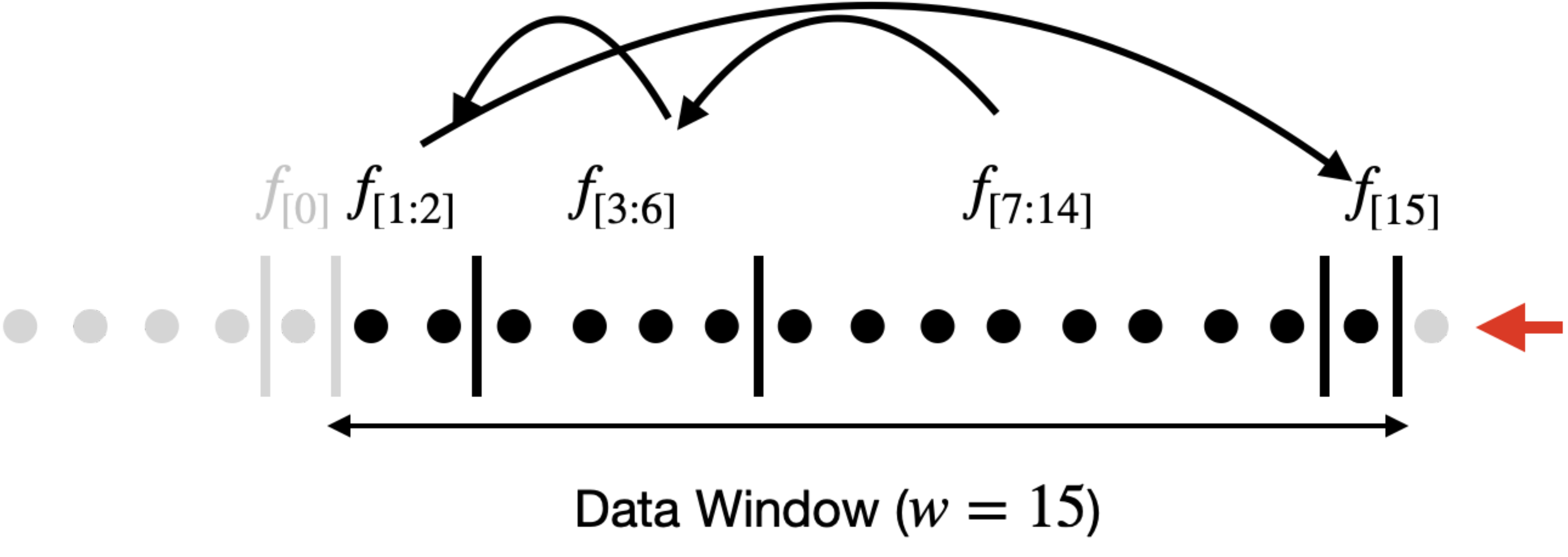}     & \includegraphics[width=2.9in]{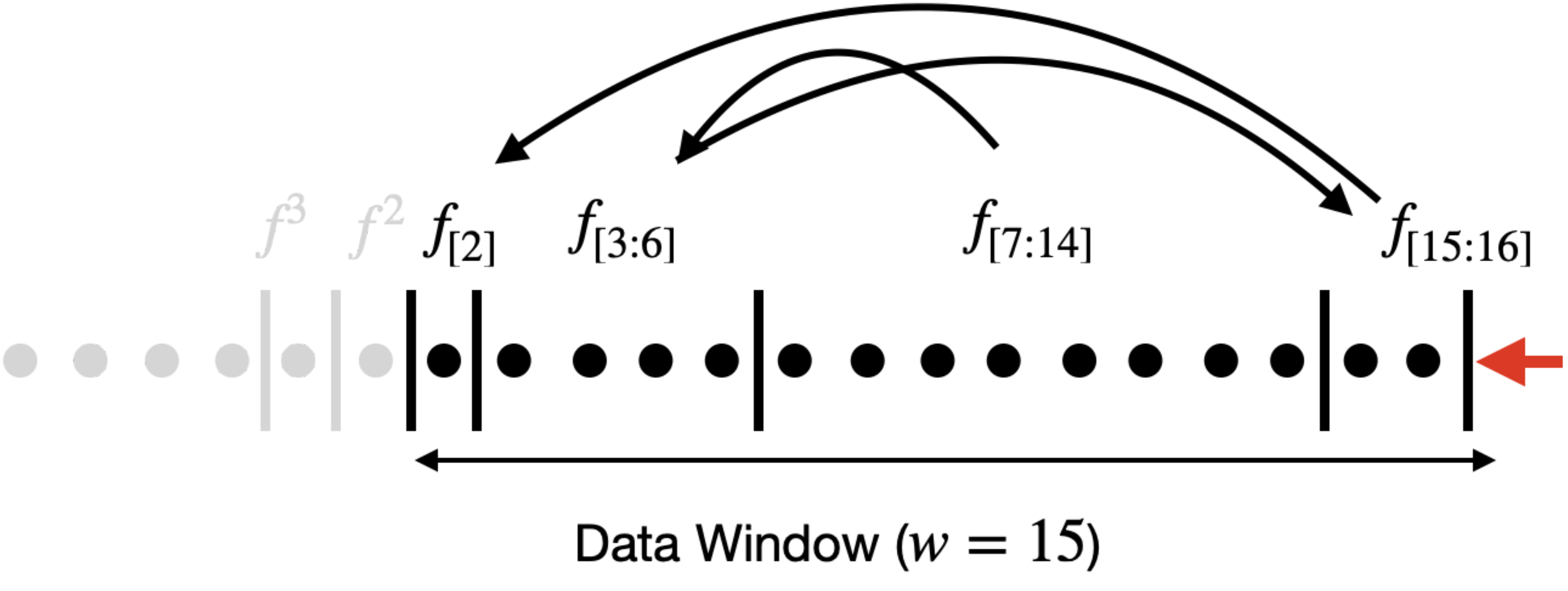} \\
(c) Sliding window update. & (d) Sliding window update. \\
\end{tabular}
\caption{Sliding window source based model release.} \label{fig:sliding} \vspace*{-1mm}
\end{figure*}
\vspace*{-2mm}
\subsection{Sliding Window Algorithm}

For simplicity of explanation, we consider a window $W$ of size $w=(\sum_{j\in[0,J]} 2^{j}w_0) - 1$, denoted by $D_{(t-w+1, t)}=\{(\mathbf{x}_i, y_i):i\in[t-w+1, t]\}$. The window is initially split into blocks of size $2^j$ for $j\in[0,J]$ as shown in Figure~\ref{fig:sliding}(a). We refer to as $f^{(k-1)}$ a model trained over a large (a constant fraction size, such as half-sized) block in $W$. The rest of $W$ is split into smaller blocks, each of size power of $2$, and models. Figure~\ref{fig:sliding}(b) shows the cascade of fine tuning of $f^{(k-1)}$ using the rest of the data of $W$. As the window progresses with time, these blocks and models are updated or reused appropriately as shown in Figure~\ref{fig:sliding}(c) and (d). When the largest block of data is no longer contained in the window, the process is refreshed to the form of Figure~\ref{fig:sliding}(b). Observe that we treat the training of new models as fine tuning of the original model, because as seen for continual release, models are close to both source and target domains when these are drawn from similar data. Algorithm~\ref{alg:slidingwindow} and Example $1$ describes the method in more detail.

\begin{algorithm}
\caption{Private Sliding Window ERM ($PSWERM$)}
\begin{algorithmic}[1]
  \scriptsize
  \STATE{Input: Dataset, $\lambda>0$, $\epsilon$, $L$, window size $w$, minimum update batch size $w_0$.}
  \STATE{/* For simplicity assume $w=2^k-1$ with $k\in \mathbb{Z}^+$ and $w_0=1$*/}
  \STATE{Partition $D_{[0: w]}$ into buckets, $w^{(i)}$ of sizes $2^{i}, 0\le i < k$}\label{line:divide}
  \STATE{Use Algorithm~\ref{alg:output} with $\Delta_{\epsilon} = \frac{6L}{\lambda \epsilon 2^{k-1}}$ on bucket $w^{(k-1)}$ to get model $f^{(k-1)}$}\label{line:initial-dependency-chain}
  \STATE{For $0\le i < k-1$, get model $f^{i}$ using bucket $w^{i}$ and regularized by $f^{i+1}$}
  \FOR{the window sliding along the stream}
    \STATE{Incorporate newly arrived data by modifying the model dependency chain as required (see Example $1$)}\label{dependency-chain-update-algo-line}
  
    \STATE{Release the model at the end of the model dependency chain}
  \ENDFOR 
\end{algorithmic}
  \label{alg:slidingwindow}
\end{algorithm}

\myparagraph{Example $1$.} Here we describe how the model dependency chains are updated for the case $w_0=1$, and $w=7$ in Algorithm~\ref{alg:slidingwindow} (also see Figure~\ref{fig:sliding} for an example with larger window $w=15$). We skip describing the method for general $w_0$ and $w$ as its involved, but it follows the same idea discussed here. Let's denote the model trained on $D_{[i:j]}$ as $f_{[i:j]}$. The base dependency chain created for the first window $D_{[0:6]}$ (Line~\ref{line:initial-dependency-chain}) is $f_{[0]} \leftarrow f_{[1:2]} \leftarrow f_{[3:6]}$. Here $f_{[i]} \leftarrow f_{[j]}$ means that $f_{[j]}$ is used as the regularizer to train $f_{[i]}$. When we receive $D_{[7]}$ (and $D_{[0]}$ goes out of window), $f_{[7]}$ is trained ($f_{[0]}$ is discarded) using the dependency chain $f_{[7]} \leftarrow f_{[1:2]} \leftarrow f_{[3:6]}$. Next when $D_{[8]}$ is received, $f_{[7:8]}$ and $f_{[2]}$ are trained and the chain becomes $f_{[2]} \leftarrow f_{[7:8]} \leftarrow f_{[3: 6]}$. Next upon receiving $D_{[9]}$ the dependency chain becomes $f_{[9]} \leftarrow f_{[7:8]} \leftarrow f_{[3: 6]}$. At the next step $D_{[3:6]}$ goes out of the window and all previous models are discarded to create new buckets and the chain becomes $f_{[4]} \leftarrow f_{[5:6]} \leftarrow f_{[7:10]}$.  

In ~\cite{datar2002maintaining}, a hierarchy is used over a sliding window for simple statistics, but allows the window to be slightly larger or smaller. In contrast, we keep the window size strictly fixed.

\myparagraph{Privacy and Utility Guarantees}
Algorithm~\ref{alg:slidingwindow} allows the release of continually updated ERM models over a sliding window of data, with only constant privacy loss. 
\begin{thm}\label{thm:sw-priv}
Algorithm~\ref{alg:slidingwindow} satisfies $\epsilon$-differential privacy.
\end{thm}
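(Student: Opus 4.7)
The plan is to bound the total privacy loss incurred by an arbitrary data point $x$ over the entire (possibly unbounded) sequence of released models. The argument has three ingredients: (i) a post-processing argument to isolate which released models actually depend on $x$; (ii) a combinatorial bound, using the hierarchical bucket structure, on the number of such models; and (iii) calibration of the Laplace noise at each bucket level so that basic composition over those models yields total privacy loss at most $\epsilon$.

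To begin, I fix $x$ and observe that a released model $f^{(i)}$ can depend on $x$ only if $x$ lies in the bucket $w^{(i)}$ used to train $f^{(i)}$. If $x \notin w^{(i)}$, then $f^{(i)}$ is determined by a fixed training set (from $x$'s viewpoint) together with its regularizer $f^{(i+1)}$, so $f^{(i)}$ is a post-processing of $f^{(i+1)}$ relative to $x$; by the post-processing property of differential privacy, no additional privacy loss for $x$ is incurred. Consequently only the models whose own training bucket directly contains $x$ contribute to $x$'s privacy loss. Next I bound the number of such direct models by $k = O(\log w)$. The update rule in Line~\ref{dependency-chain-update-algo-line}, illustrated by Example~$1$ and Figure~\ref{fig:sliding}, maintains the invariant that at every time step the current window is partitioned into at most one bucket of each dyadic size $2^j$, for $j \in \{0, \ldots, k-1\}$. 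During the $w$ consecutive steps that $x$ resides in the window, the bucket containing $x$ can only change when a refresh event occurs (the largest bucket expires and the hierarchy is rebuilt) or when the current bucket is absorbed into a larger bucket during a routine update; by induction on the step count, the sequence of bucket sizes that contain $x$ is monotone, so $x$ is a direct training point of at most one model per dyadic level, giving the bound $k$.

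Finally, each direct model on a bucket of size $2^j$ is produced by Algorithm~\ref{alg:output} or Algorithm~\ref{alg:pb-erm}, both of which output-perturb a $\lambda$-strongly convex regularized ERM, so by the standard sensitivity bound~\cite{chaudhuri2011differentially} the $\ell_2$-sensitivity in $x$ is $O(L/(\lambda 2^j))$. Setting the Laplace scale at level $j$ proportional to $Lk/(\lambda \epsilon 2^j)$, which up to a constant matches the stated $\Delta_{\epsilon} = 6L/(\lambda \epsilon 2^{k-1})$ at the top level and scales inversely with bucket size at the lower levels, makes each direct release $(\epsilon/k)$-DP with respect to $x$. Basic sequential composition over the at most $k$ such releases yields total privacy loss at most $\epsilon$, and since $x$ was arbitrary, Algorithm~\ref{alg:slidingwindow} is $\epsilon$-differentially private. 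The main obstacle is the combinatorial step (ii): rigorously establishing the monotonicity invariant for the bucket containing $x$ under the general update rule of Line~\ref{dependency-chain-update-algo-line}, for arbitrary $w$ and $w_0$. Example~$1$ treats only $w=7$, $w_0=1$, and the general case requires an inductive case analysis of refresh events to show that rebuilding the hierarchy never re-trains a model on a bucket already covered earlier in $x$'s lifetime. Given that invariant, the remaining pieces, namely post-processing, the Laplace mechanism, and basic composition, are standard tools.
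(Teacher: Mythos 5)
Your post-processing observation (only the models whose own training bucket contains $x$ incur privacy loss for $x$) is sound and is implicitly the paper's starting point as well, but two of your three ingredients have genuine problems. First, the combinatorial invariant you rely on is false: the sequence of bucket sizes containing a fixed point $x$ is not monotone. Trace Example~$1$: point $2$ is first trained in the size-$2$ bucket $f_{[1:2]}$ and later re-trained in the size-$1$ bucket $f_{[2]}$; point $6$ sits in the size-$4$ base bucket $f_{[3:6]}$ and after the refresh is re-trained in the size-$2$ bucket $f_{[5:6]}$. Over its lifetime a point participates in buckets of sizes $1,2,\dots,2^{J-1}$ while it is new data to the right of the base model, then in the size-$2^J$ base bucket, then again in buckets of sizes $2^{J-1},\dots,2,1$ as it ages out on the left. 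The size profile is unimodal, and $x$ can be a direct training point of \emph{two} models per dyadic level (one ascending, one descending), so the count is roughly $2k$, not $k$; with your $(\epsilon/k)$-per-release calibration, basic composition then gives about $2\epsilon$, not $\epsilon$.

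Second, the noise calibration you analyze is not the one Algorithm~\ref{alg:slidingwindow} uses, so even granting the counting step you would be proving privacy of a different mechanism. You propose a scale proportional to $Lk/(\lambda\epsilon 2^j)$ at level $j$, i.e.\ noise that decreases with bucket size and carries an explicit factor of $k=\Theta(\log w)$ (not a constant, so it does not match $\Delta_\epsilon = 6L/(\lambda\epsilon 2^{k-1})$ ``up to a constant''). The algorithm instead adds a \emph{fixed} noise scale $\Theta\!\left(\frac{L}{\lambda\epsilon w_0}\right)$ to every fine-tuning update regardless of bucket size, and a single scale $\frac{6L}{\lambda\epsilon 2^{k-1}}$ to the base model. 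The paper's proof exploits exactly this: since the sensitivity of a level-$j$ update is $O\!\left(\frac{L}{\lambda 2^j w_0}\right)$ while the noise is constant in $j$, the per-level privacy cost decays geometrically, $\frac{\epsilon}{6}+\frac{\epsilon}{12}+\frac{\epsilon}{24}+\cdots\leq\frac{\epsilon}{3}$, and the unimodal lifetime is handled by partitioning a point's usage into three phases (left of the base, the base itself, right of the base), each budgeted at $\epsilon/3$. The geometric-series argument is what makes the total independent of $k$ without inflating the noise by $\log w$; your uniform $\epsilon/k$ split both changes the algorithm and, combined with the incorrect count, fails to reach the stated bound.
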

Denote the private released model for the window by $f^w$, parameterized by weights $\mathbf{w}$.  As before, the true minimizer is denoted by $\mathbf{w}^*=\argmin_{\mathbf{w}}\mathbb{E}_{D^{new}\sim\mathcal{D}^{new}}[\ell(f(\mathbf{x}_i), y_i)]$ where $D^{new}\sim\mathcal{D}^{new}$ is the last batch of data used to update the model. See the appendix for an analogue of Theorem~\ref{thm:continual_lastutility} in this scenario.

\begin{thm}\label{thm:sliding_utility }
Suppose the loss function used in Algorithm~\ref{alg:slidingwindow} is $L$-Lipschitz, $|\ell|_\infty\leq M$, and $\lambda\leq\frac{1}{\norm{\mathbf{w}^*-\mathbf{w}_{g}}_2^2 w_0}$ with $\mathbf{w}\in\mathbb{R}^d$, then with probability at least $1-e^{-\eta}$: 
\begin{align*}
    &\hat{L}_{D^{new}}(\mathbf{w})-L(\mathbf{w}^*) \\
    &\leq \sqrt{\frac{2\eta L(\mathbf{w}^*)}{w_0}}+ \frac{ 1.5M \eta+1}{w_0}+\ln(\frac{d}{e^{-\eta}})\frac{12dL^2}{\lambda w_0 \epsilon}
\end{align*}
\end{thm}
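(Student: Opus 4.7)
The plan is to mimic the proof of Theorem~\ref{thm:continual_utility}, since the statement has identical structure: the first two terms measure the statistical gap between empirical and true risk over $w_0$ fresh samples, while the last term quantifies the cost of privatization. The two changes are that (i) the most recent update batch has size $w_0$ (instead of $2^j b_0$), and (ii) the released model sits at the end of a model-dependency chain, so the effective Laplace noise scale must absorb composition across several PBERM invocations, producing the constant $12$ in place of $4$.

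First I would decompose, with $\hat{\mathbf{w}}$ the non-private minimizer of the biased-regularized objective actually solved in the terminal node of the chain (regularized toward the upstream noisy model $\mathbf{w}_g$),
\begin{align*}
\hat{L}_{D^{new}}(\mathbf{w}) - L(\mathbf{w}^*)
&= \bigl[\hat{L}_{D^{new}}(\mathbf{w}) - \hat{L}_{D^{new}}(\hat{\mathbf{w}})\bigr] \\
&\quad + \bigl[\hat{L}_{D^{new}}(\hat{\mathbf{w}}) - L(\mathbf{w}^*)\bigr].
\end{align*}
For the second (statistical) term I would invoke the one-sided Bernstein-style concentration used in the hypothesis-transfer-learning analysis of~\cite{kuzborskij2013stability}, as in Theorem~\ref{thm:continual_utility}. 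With $|\ell|_\infty \leq M$ and the assumption $\lambda \leq 1/(\norm{\mathbf{w}^* - \mathbf{w}_g}_2^2 w_0)$, this gives, with probability at least $1 - e^{-\eta}$, a bound of $\sqrt{2\eta L(\mathbf{w}^*)/w_0} + (1.5 M \eta + 1)/w_0$. The reason $L(\mathbf{w}^*)$ appears in the square-root rather than the $R_g$ of Theorem~\ref{thm:continual_utility} is that in the sliding-window model the last block $D^{new}$ is sampled i.i.d.\ from $\mathcal{D}^{new}$ and the empirical risk of $\hat{\mathbf{w}}$ is controlled directly against $L(\mathbf{w}^*)$, with no separate dependence on the upstream model's error.

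For the first (privacy) term I would use $L$-Lipschitzness to write $\hat{L}_{D^{new}}(\mathbf{w}) - \hat{L}_{D^{new}}(\hat{\mathbf{w}}) \leq L \norm{\mathbf{w} - \hat{\mathbf{w}}}_1$, where $\mathbf{w} - \hat{\mathbf{w}}$ is exactly the Laplace perturbation injected at the terminal node of the dependency chain (the upstream noise is absorbed into $\mathbf{w}_g$ and already accounted for in the statistical term via its influence on $\hat{\mathbf{w}}$). A standard union-bound tail for a $d$-dimensional Laplace vector gives $\norm{\nu}_1 \leq d \, \Delta_\epsilon \ln(d/\delta)$ with probability at least $1 - \delta$. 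Setting $\delta = e^{-\eta}$ and substituting the sensitivity that PSWERM uses at the terminal $w_0$-sized bucket produces the third term $\ln(d/e^{-\eta}) \cdot 12 d L^2 / (\lambda w_0 \epsilon)$.

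The main obstacle is verifying the constant $12$. Algorithm~\ref{alg:slidingwindow} assigns scale $6L/(\lambda \epsilon \cdot 2^{k-1})$ at the coarsest bucket, and the privacy accounting along the chain requires a careful budget split among the base PSGD, the intermediate PBERM calls, and the terminal PBERM on the $w_0$-sized bucket; once the split is propagated to the terminal node the effective scale is $12L/(\lambda w_0 \epsilon)$, a factor of three larger than the $4L/(\lambda b_0 \epsilon)$ of Theorem~\ref{thm:continual_utility}. Once this scaling is pinned down, the remainder of the argument is the same template as the cumulative case, with $b_0$ uniformly replaced by $w_0$.
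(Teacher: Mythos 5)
Your proposal follows essentially the same route as the paper's proof: the same two-part decomposition through the non-private biased-ERM minimizer, the Lipschitz-plus-Laplace-tail bound for the privacy term with noise scale $12L/(\lambda w_0 \epsilon)$ at the terminal node, and the empirical-minimizer inequality combined with the $\lambda \leq 1/(\norm{\mathbf{w}^*-\mathbf{w}_g}_2^2 w_0)$ assumption and Bernstein's inequality for the statistical terms. The constant $12$ you flag as the main obstacle is indeed just the Laplace scale used for the fine-tuning updates in the sliding-window privacy accounting, exactly as you surmise, so no further work is needed.
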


\myparagraph{Sampling based enhancements.} Sampling can be used to amplify differential privacy~\citep{balle2018privacy}. In our algorithms, sampling will reduce noise in updates with larger blocks. In the current version, all update steps use a noise scale of $O(\frac{1}{b_0})$. Sampling can be used to enhance privacy so that smaller noise scales can be used for updates that are computed over blocks larger than $b_0$. Sampling based algorithms are described in the appendix.

\section{Experiments}

\label{sec:experiments}

\begin{figure*}[ht!]
\centering
\begin{tabular}{ccc}
\includegraphics[width=2in]{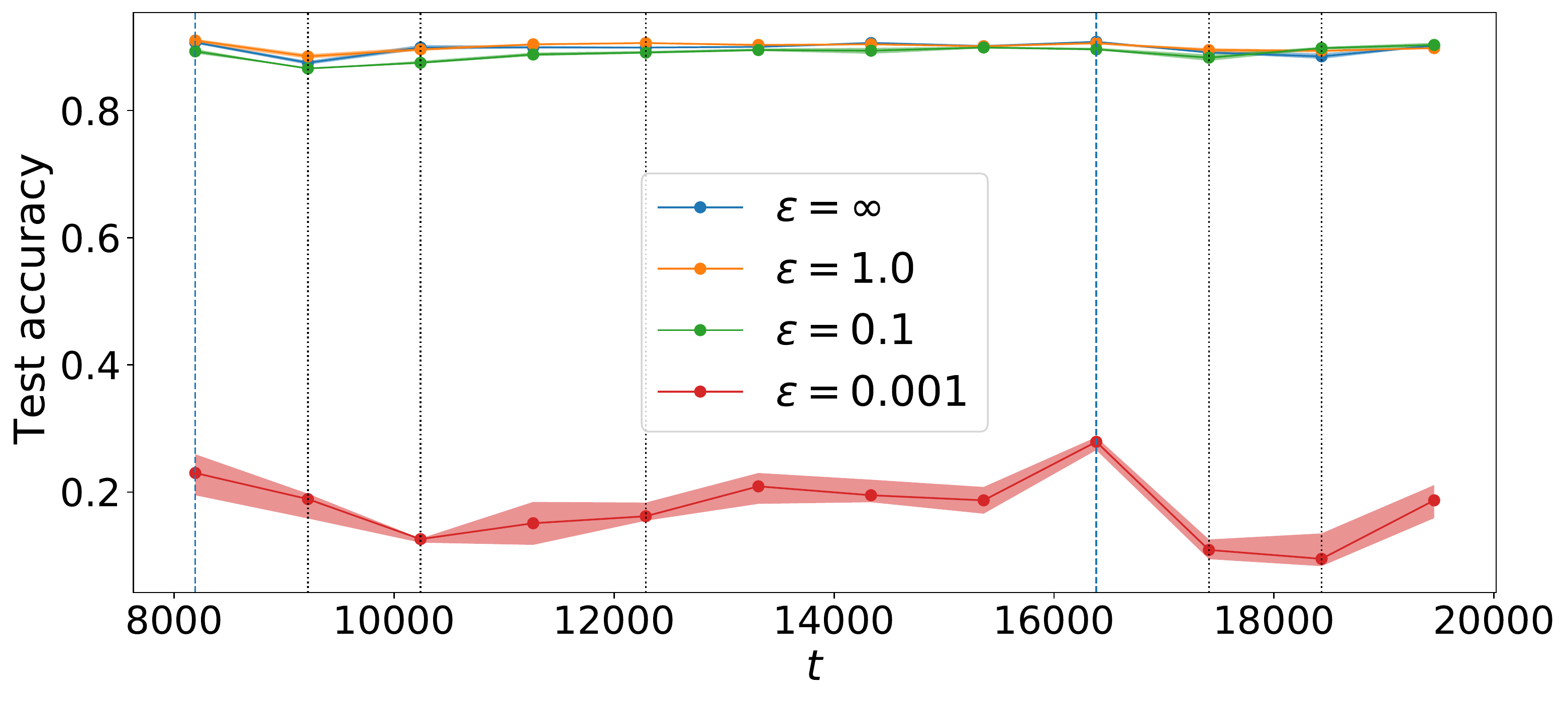} &
\includegraphics[width=2in]{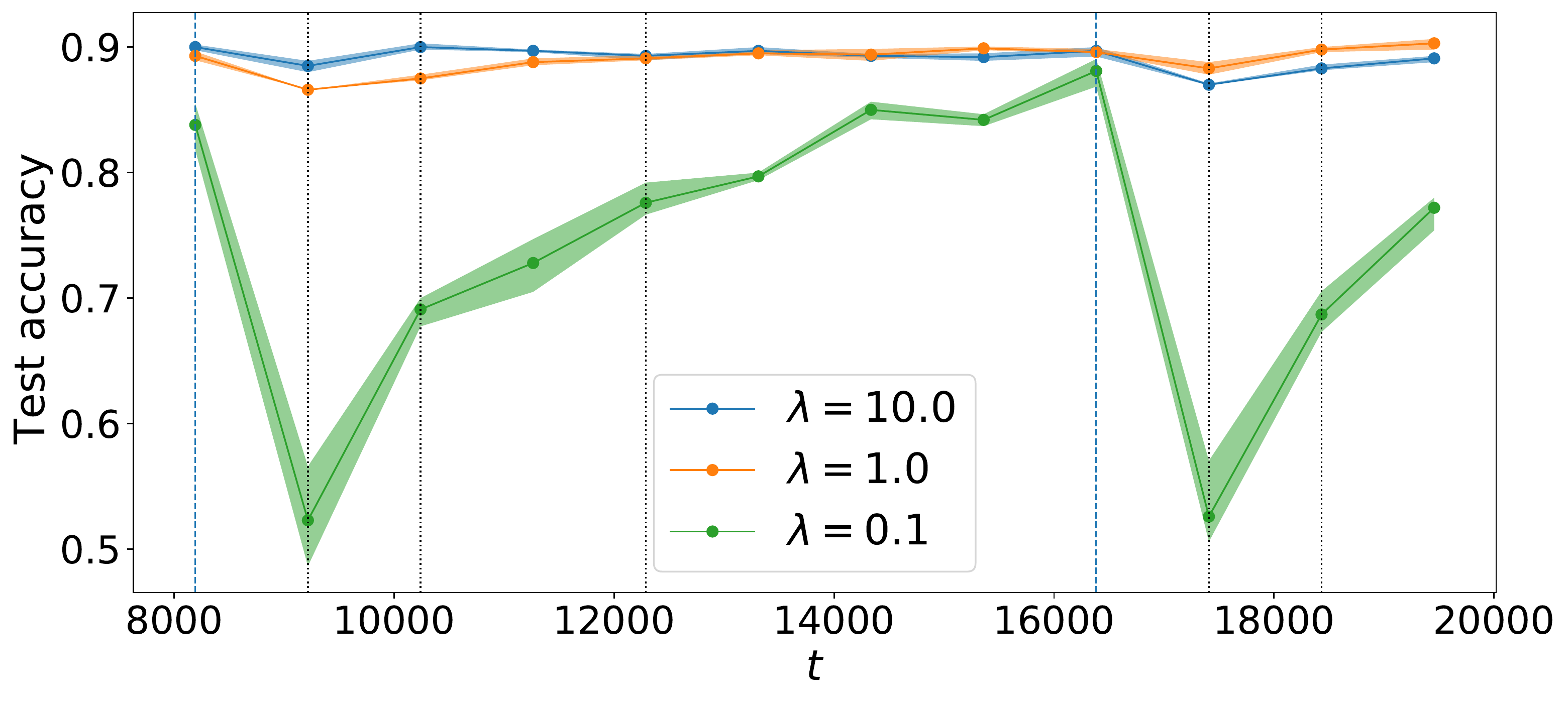} &
\includegraphics[width=2in]{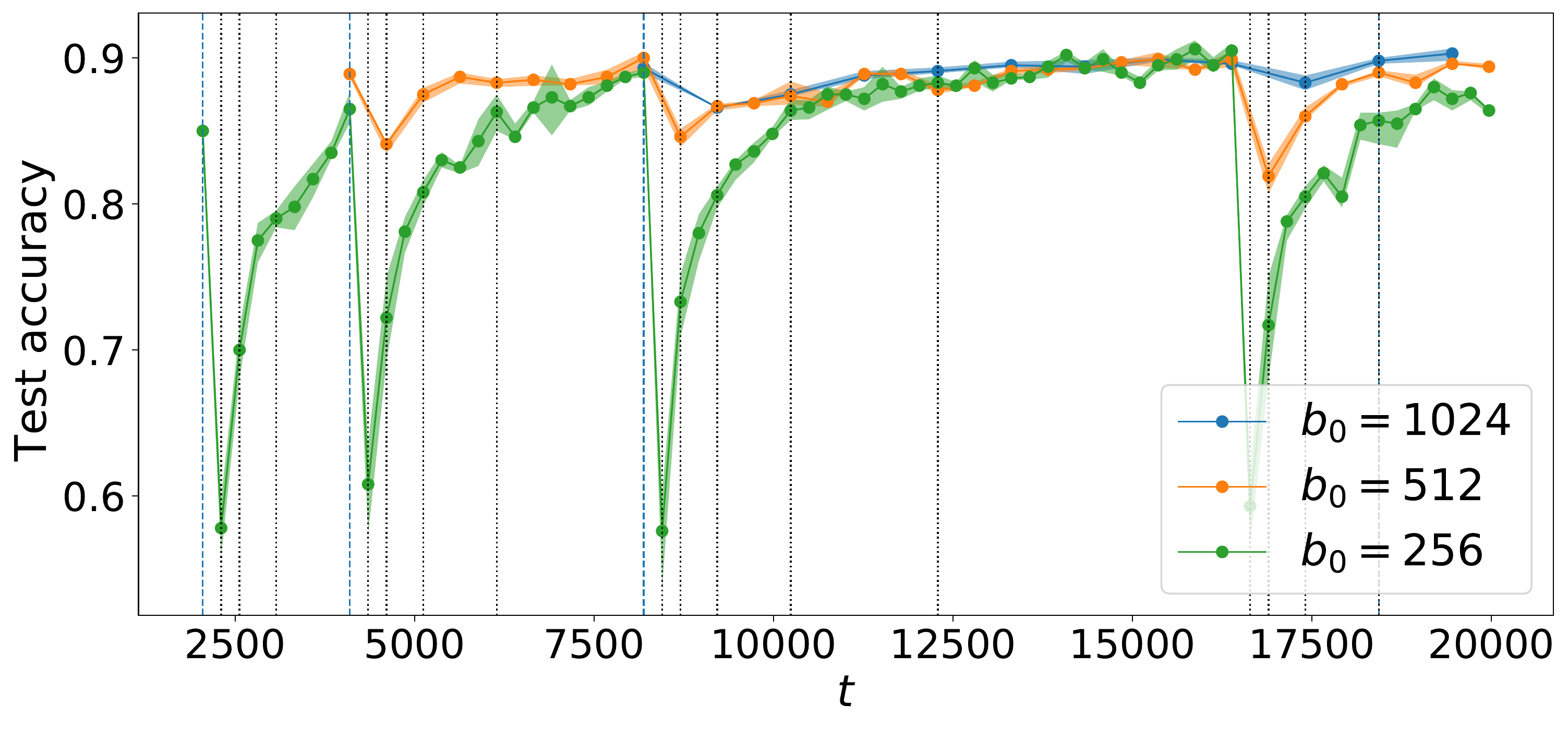}\\
(a) &  (b) & (c) \\
\vspace{-0.1cm}
\\
\includegraphics[width=2in]{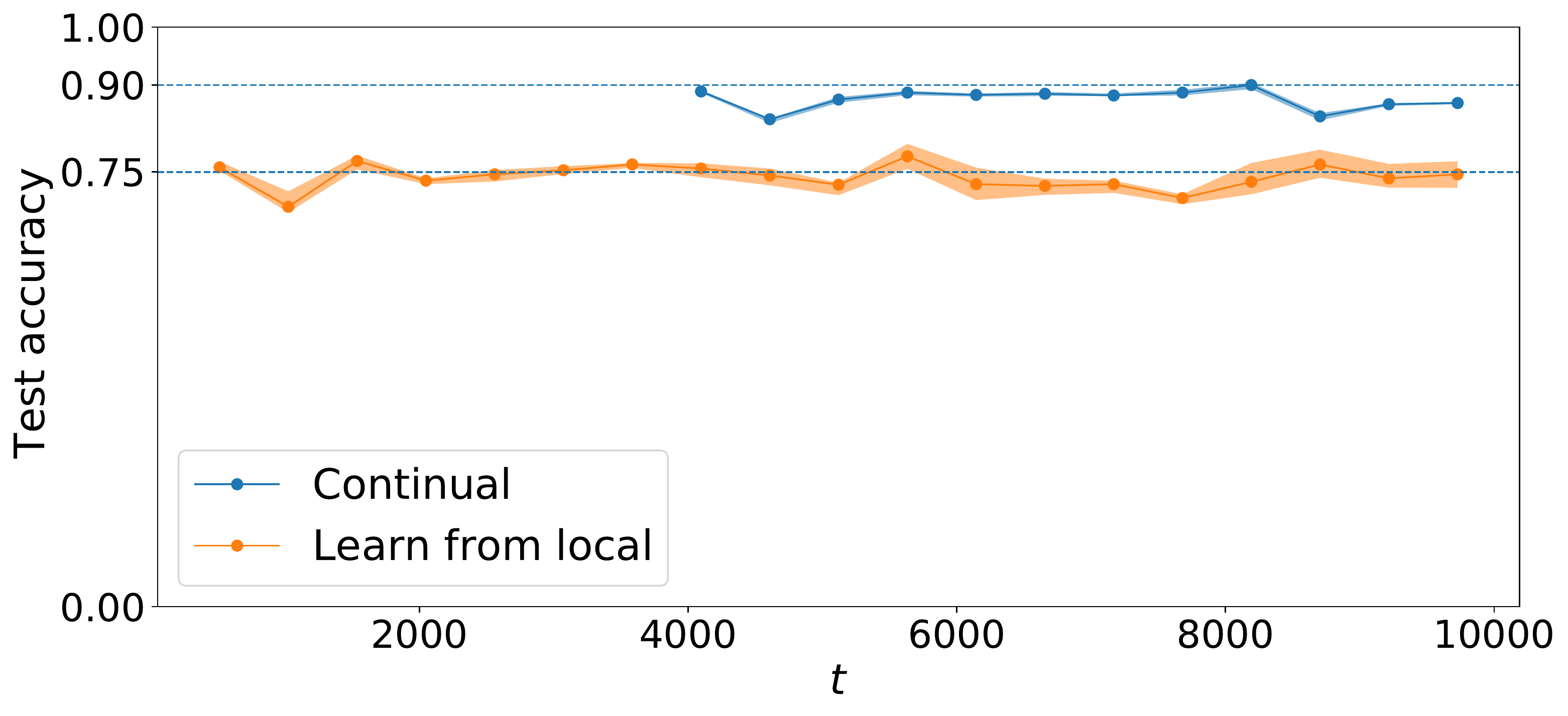} &
\includegraphics[width=2in]{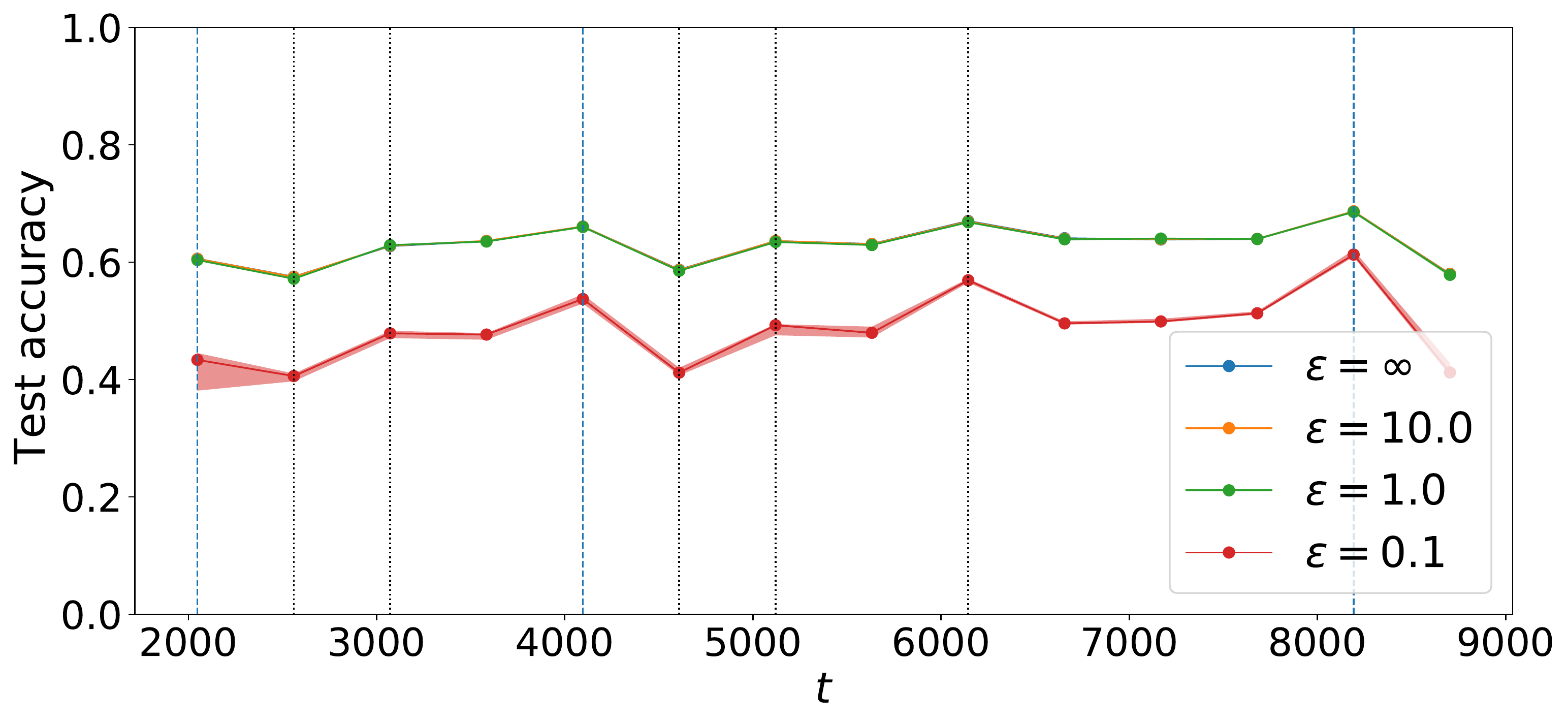} &
\includegraphics[width=2in]{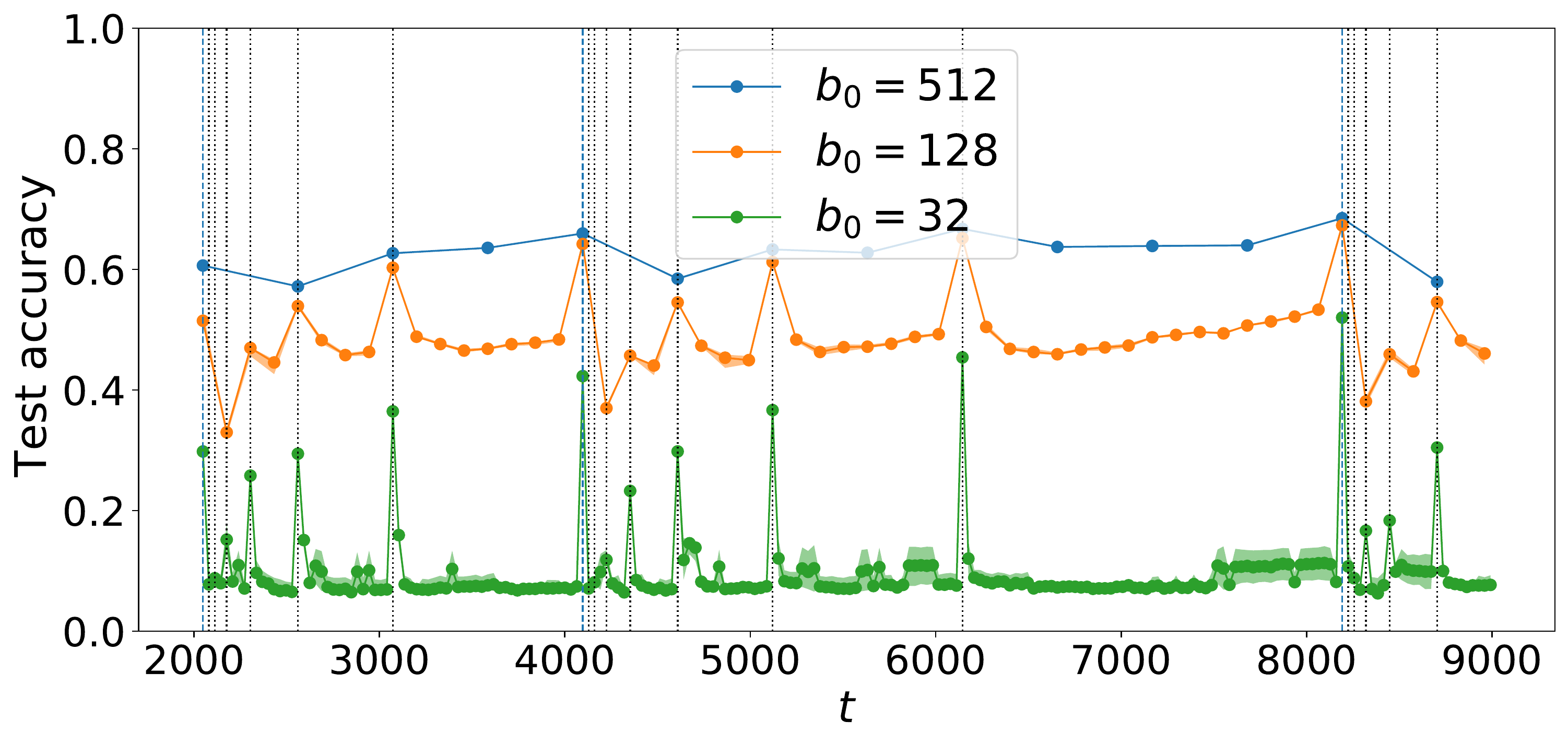}\\
(d) & (e) & (f) \\
\vspace{-0.1cm}
\\
\includegraphics[width=2in]{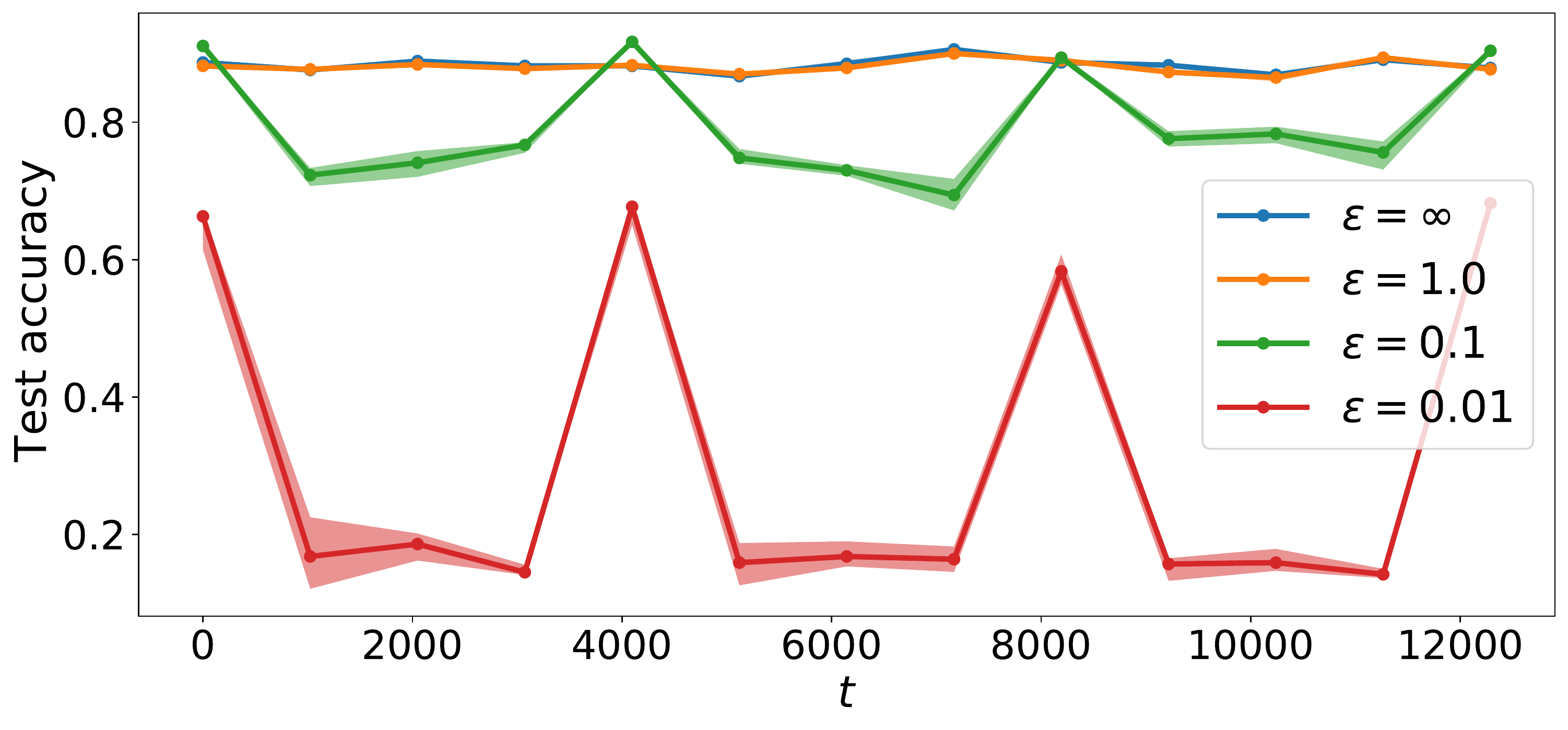} &
\includegraphics[width=2in]{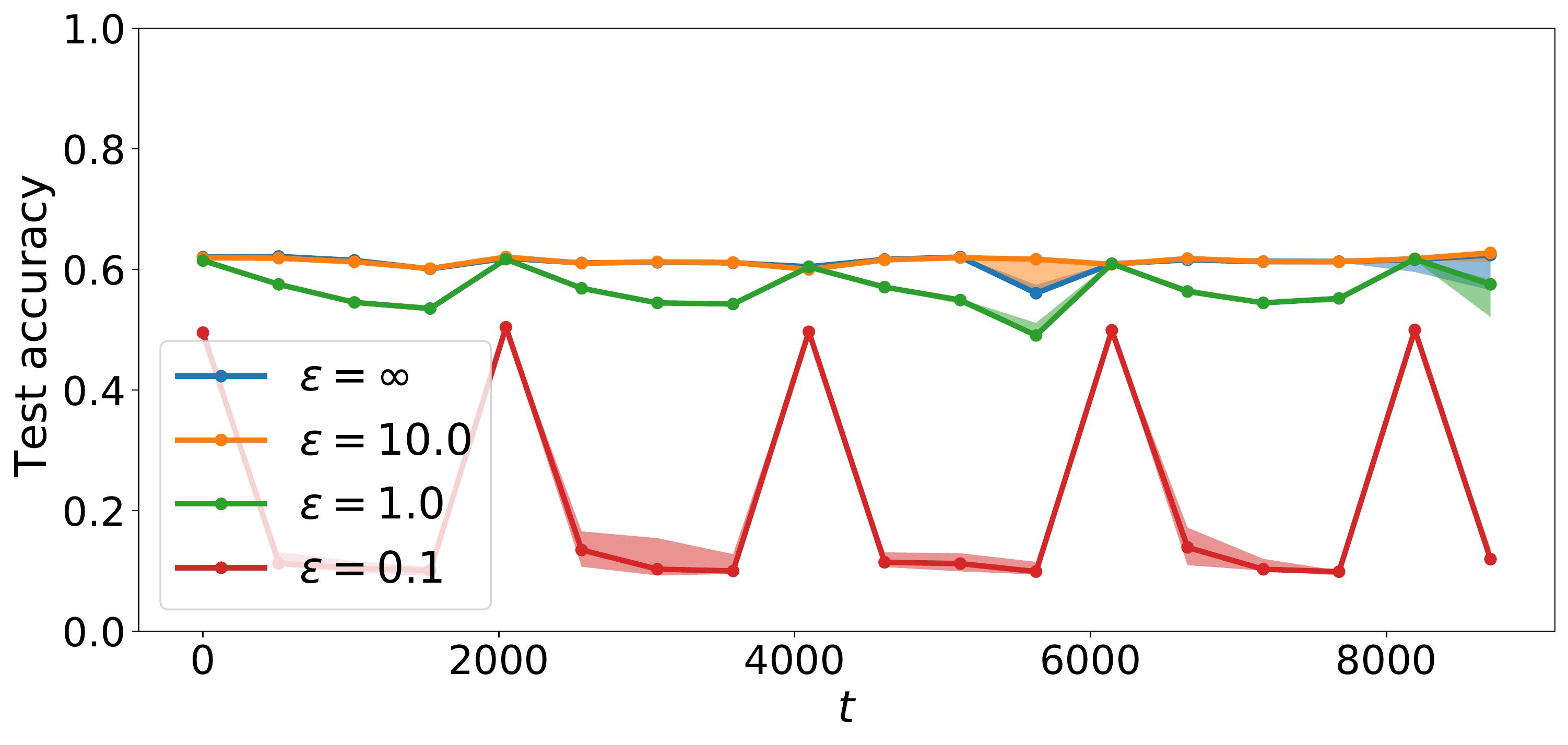} &
\includegraphics[width=2in]{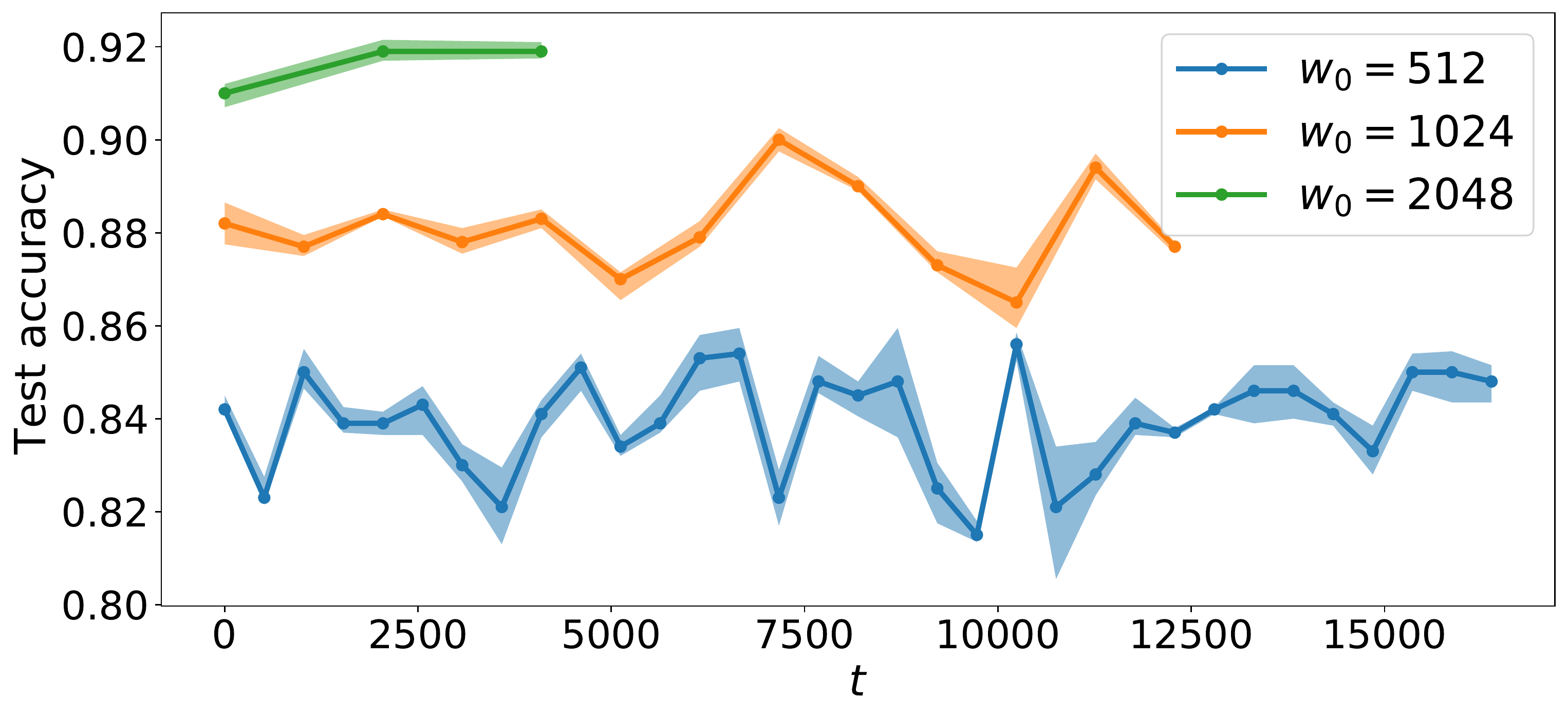} \\
(g) &  (h) & (i)\\

\end{tabular}
 \caption{Results for multi-resolution continual release algorithm on MNIST (a, b, c, d) and Arxiv dataset(e, f). \textbf{(a, b, c)} $b_0=1024$, and $B=b_0 \times 8$, $\epsilon=0.1$, and $\lambda=1$. \textbf{(d)} Compare with baseline with $b_0=512$, $\epsilon=0.1$ and $\lambda=1$. \textbf{(e, f)} For the Arxiv dataset, $B=2048$, $b_0=512$, $\lambda=10$, and $\epsilon=1$. Sliding window algorithm is evaluated in MNIST\textbf{(g, i)} and Arxiv\textbf{(i)} dataset. For MNIST $\lambda=1, \epsilon=1,$ and $w_0=1024$. Arxiv uses $\lambda=10, \epsilon=1,$ and $w_0=256$. In all cases $w=7\times w_0$. The accuracy for a sliding window is reported at the beginning of the window. For MNIST we consider sequence length of $20$k, and we don't report accuracy when the window is partially full; Thus some of the results in \textbf{(i)} are cut.}
\label{fig:continual-release-multires}
\end{figure*}

In this section we experimentally evaluate the proposed algorithms in two datasets: $(i)$ MNIST dataset of $60$k images of handwritten digits of size $28\times 28$. Here we classify the images to recognize the digits ($10$ class problem) using their $784$ pixel values. The images are randomly ordered to form a stream. The experiments only show results till $t=20$k. We use the test dataset provided with MNIST. ($ii)$ Arxiv dataset contains metadata of $10$k articles uploaded to Arxiv from $2007$\footnote{https://www.kaggle.com/Cornell-University/arxiv.} and we classify the category of an article from $17$ classes (e.g., `cond-mat', `math', etc. excluding the articles that contains multiple categories). The articles are ordered by their first submission date. We consider the vector of word counts from the title as the feature. Here we present the results with dimensions reduced to $512$ using PCA to make the base classification efficient. We held out $25\%$ data points as test and use the rest as the training sequence.

All experiments use a logistic regression classifier with SGD. Both MNIST and Arxiv dataset use minibatch of size $256$ and run for $500$ iterations. In both cases we use Algorithm~\ref{alg:output} with $\gamma=10$. All experiments are repeated $4$ times and the plots show the median in solid line with shaded area between $25$ and $75$ percentiles. All experiments were done on an intel $i7$ cpu of a laptop running Linux. The code is implemented in python using standard libraries including pytorch. The hyperparameters are not tuned for better accuracy as the primary goal of the empirical evaluation is to support the theoretical results by showing the relative performance for different privacy parameters.

The Lipschitz constant for cross entropy loss is calculated as $\frac{k-1}{2mk}\norm{X}$ where $k$ and $m$ are the number of classes and the number of training samples; $X$ is the feature matrix, and the $\norm{.}$ represents the Frobenius norm as described in~\cite{yedida2021lipschitzlr}.


\myparagraph{Evaluating the continual release algorithm}
We evaluate the continual release algorithm in Figure~\ref{fig:continual-release-multires} for the MNIST and Arxiv datasets. Both show the natural pattern -- with increasing $\epsilon$, the accuracy increases. In the MNIST dataset, the accuracy of the $\epsilon=0.1, 1$ classifiers match the non-private version. With decreasing $b_0$ and  $\lambda$, the accuracy drops.

In Figure~\ref{fig:continual-release-multires}(d) we also find that our continual release algorithm achieves better test accuracy than a baseline approach where each batch of size $b_0$ is trained and sanitized independently using Algorithm~\ref{alg:privateHTL}.


\myparagraph{Evaluating the sliding window algorithm} The sliding window algorithm is evaluated in Figure~\ref{fig:continual-release-multires}(g-i) for both the datasets. In both the datasets the accuracy increases with increasing $\epsilon$. Both the datasets achieve the non-private accuracy at $\epsilon=1$. Further with increasing the window size, $w_0$, the accuracy increases (Figure~\ref{fig:continual-release-multires}(i)).

\vspace*{-1mm}
\section{Conclusion}
\vspace*{-1mm}
There remains a gap between the existing theory of differential privacy and the requirements of practical deployments, such as locally private and evolving dataset scenarios. We addressed the problem of releasing up-to-date private machine learning models for evolving datasets over infinite timescales with only constant privacy loss, focusing on the utility of our models for recently arrived data. We find that the utility of these methods depends on the available update datasize and level of regularization. The theoretical results are validated by empirical experiments.

{\bf Limitations.} The limitations of our approach include its assumptions of strong convexity and smoothness. The use of output perturbation also requires finding the exact minimizer. It will be ideal to remove these restrictions in future works. Our approach also does not achieve empirical risk minimization for the whole dataset at every step. While at exponentially growing intervals we can release a model that achieves the global ERM, the updates between these releases are only targeted at local data. The algorithm, as described, is also specific to regularized ERM via stochastic gradient descent. While in principle the concepts can work for other techniques, this question remains to be explored by future work. 

In future, in addition to investigating the limitations and extensions outlined above, we aim to develop similar algorithms for other machine learning and optimization tasks. This includes the setting of continual and sliding window release for private submodular maximization. 



\newpage
\bibliography{sample-base}

\newpage
\appendix

\section{Proofs}
\subsection{Relevant Results}
The following Lemmas describe results from the private empirical risk minimization literature that are relevant to the subsequent proofs. 
\begin{lemma}[\citep{ wu2017bolt}]\label{lemma:output}
Algorithm~\ref{alg:output} satisfies $\epsilon$-differential privacy given a dataset of size $B$ and noise scale $\Delta_{\epsilon}=\frac{2L}{\lambda B \epsilon}$
\end{lemma}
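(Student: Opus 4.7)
The plan is to argue $\epsilon$-DP in two stages: first bound the $L_2$ sensitivity of the (non-private) SGD output map $D \mapsto \mathbf{w}(D)$ by $\Delta = 2L/(\lambda B)$, and then conclude by the standard guarantee of the Laplace mechanism, which turns $L_2$ sensitivity $\Delta$ plus $\Lap(\Delta/\epsilon)$ noise into $\epsilon$-DP. Since the Laplace step is immediate once the sensitivity is in hand, essentially the whole proof reduces to the algorithmic-stability calculation for regularized SGD.

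For the sensitivity bound I would proceed in the usual ``bolt-on'' style of \citet{wu2017bolt}. Fix neighboring datasets $D, D'$ differing in exactly one example, let $\mathbf{w}_i, \mathbf{w}_i'$ be the iterates produced by running the same SGD schedule on $D$ and $D'$ with learning rate $\eta_i = 1/(\gamma i)$, and track $\delta_i := \lVert \mathbf{w}_i - \mathbf{w}_i'\rVert_2$. For any round where SGD draws a common index, the update is a gradient step on the same $\lambda$-strongly convex summand, and so (taking $\gamma$ large enough that $\eta_i \le 1/\lambda$, as the algorithm assumes) the update map is a contraction with factor at most $1 - \lambda \eta_i$; for the single round where the differing example is touched, the two updates disagree by at most $2L\eta_i$ using $L$-Lipschitzness of $\ell$. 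Unrolling the recursion $\delta_i \le (1-\lambda\eta_i)\delta_{i-1} + 2L\eta_i \cdot \mathbf{1}[\text{differing point}]/B$ and summing over a pass through the dataset of size $B$ yields the familiar bound $\delta_{\text{final}} \le 2L/(\lambda B)$; this is the sensitivity claim (it is also invariant to the number of passes $k$, because each subsequent pass only contracts $\delta$ further).

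Given this sensitivity, the Laplace mechanism applied coordinate-wise with scale $\Delta/\epsilon = 2L/(\lambda B \epsilon)$ immediately gives $\epsilon$-DP by the standard one-line argument: the ratio of output densities is bounded by $\exp(\epsilon \lVert \mathbf{w}(D) - \mathbf{w}(D')\rVert_1 / \Delta) \le e^\epsilon$ after converting $L_2$ sensitivity to the $L_1$-style bound the Laplace calibration uses.

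The one step I expect to be the main obstacle is the contraction argument in the iterate-coupling calculation: making it clean requires verifying that the chosen schedule $\eta_i = 1/(\gamma i)$ really does produce a contraction factor $\le 1 - \lambda \eta_i$ at every step and that the ``bad'' round contributes at most $2L\eta_i/B$ after averaging, rather than $2L\eta_i$. Everything else (Lipschitzness giving the one-step discrepancy bound, Laplace mechanism giving $\epsilon$-DP from $L_2$ sensitivity) is routine. The rest of the statement, namely matching the constant $2L/(\lambda B \epsilon)$, simply reflects that $\Delta/\epsilon$ is passed directly as the scale parameter to $\Lap(\cdot)$ in Algorithm~\ref{alg:output}.
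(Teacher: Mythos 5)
The paper does not actually prove this lemma: it is imported verbatim from \citet{wu2017bolt} and used as a black box in the appendix, so there is no in-paper argument to compare against. Your reconstruction follows the same route as the cited source --- bound the $L_2$ sensitivity of the SGD output map by an iterate-coupling stability argument exploiting $\lambda$-strong convexity, then privatize with noise of scale $\Delta/\epsilon$ --- and that is the right skeleton.

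Two steps as you have written them do not close, though both are repairable. First, in the coupling recursion the differing round contributes $2L\eta_{i_0}$, not $2L\eta_{i_0}/B$: each SGD step touches a single example, so there is no averaging over $B$ inside a step. The $1/B$ in the final bound comes instead from the contraction applied over the remaining rounds, $\prod_{i>i_0}(1-\lambda\eta_i)$, which with $\eta_i=1/(\lambda i)$ telescopes to $i_0/B$, giving $2L\eta_{i_0}\cdot (i_0/B) = 2L/(\lambda B)$. You correctly flag this as the delicate step, but the recursion you wrote has the $1/B$ in the wrong place, and the argument also needs $\gamma$ tied to $\lambda$ (or $\eta_i=\min(1/\beta,1/(\lambda i))$) for the contraction factor $1-\lambda\eta_i$ and the telescoping to hold. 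Second, your Laplace step bounds the density ratio by $\exp\left(\epsilon\norm{\mathbf{w}(D)-\mathbf{w}(D')}_1/\Delta\right)$, but you only established an $L_2$ sensitivity bound; converting $L_2$ to $L_1$ costs a factor of $\sqrt{d}$ and would destroy the guarantee. The mechanism of \citet{wu2017bolt} avoids this by sampling $\nu$ from the spherically symmetric distribution with density proportional to $e^{-\epsilon\norm{\nu}_2/\Delta}$, for which $L_2$ sensitivity directly yields $\epsilon$-DP; with genuinely coordinate-wise Laplace noise at the stated scale the lemma is false for $d>1$.
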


\begin{lemma}[\citep{chaudhuri2011differentially, wu2017bolt}]\label{lemma:noise}
For noise vector  $\nu\sim Lap(\frac{\Delta}{\epsilon})$ of dimension $d$, with probability at least $1-\beta$,
\[\norm{\nu} \leq \ln\left(\frac{d}{\beta}\right)\frac{d\Delta}{ \epsilon}\]
\end{lemma}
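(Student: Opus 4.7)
The plan is to apply a union bound over the coordinates of $\nu$ to convert the single-coordinate Laplace tail bound into a bound on the norm of the vector. First I would observe that each coordinate $\nu_i$ is an independent Laplace random variable with scale $\Delta/\epsilon$, so $|\nu_i|$ is exponentially distributed with rate $\epsilon/\Delta$. This immediately gives $\Pr[|\nu_i| > t] = e^{-t\epsilon/\Delta}$ for each $i \in [d]$.

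Next, applying the union bound across the $d$ coordinates yields
\[
\Pr\bigl[\norm{\nu}_\infty > t\bigr] \leq d \cdot e^{-t\epsilon/\Delta}.
\]
Setting the right-hand side equal to $\beta$ and solving for $t$ gives $t = \frac{\Delta}{\epsilon}\ln(d/\beta)$, so with probability at least $1-\beta$ every coordinate satisfies $|\nu_i| \leq \frac{\Delta}{\epsilon}\ln(d/\beta)$. I would then convert the coordinate-wise bound into a norm bound via the inequality $\norm{\nu} \leq d \norm{\nu}_\infty$ (valid for the $\ell_1$ norm; the $\ell_2$ norm actually obeys a tighter bound with a factor of $\sqrt{d}$, still dominated by $d$). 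This yields $\norm{\nu} \leq \ln(d/\beta) \cdot \frac{d \Delta}{\epsilon}$ with probability at least $1-\beta$, matching the claim.

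There is no substantive obstacle; the only modelling subtlety is whether the paper intends $\nu$ to have iid Laplace coordinates or to be drawn from the spherical Laplace density $p(\nu) \propto e^{-\epsilon\norm{\nu}/\Delta}$ used by Chaudhuri et al. In the latter case, $\norm{\nu}$ follows a Gamma distribution with shape $d$ and rate $\epsilon/\Delta$, and essentially the same threshold emerges from a Chernoff bound on the Gamma moment generating function (optimising $\Pr[\norm{\nu} > t] \leq e^{-st}(1-s\Delta/\epsilon)^{-d}$ over $s \in (0,\epsilon/\Delta)$). Either way the stated bound goes through unchanged, which I would flag before committing to one of the two derivations.
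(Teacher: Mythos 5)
The paper never proves this lemma --- it is imported as a ``relevant result'' from \citet{chaudhuri2011differentially} and \citet{wu2017bolt} --- so there is no in-paper proof to compare against; what matters is whether your derivation stands on its own, and it does. The per-coordinate exponential tail $\Pr[|\nu_i|>t]=e^{-t\epsilon/\Delta}$, the union bound giving $\norm{\nu}_\infty\leq\frac{\Delta}{\epsilon}\ln(d/\beta)$ with probability $1-\beta$, and the norm comparison $\norm{\nu}_1\leq d\norm{\nu}_\infty$ (or $\norm{\nu}_2\leq\sqrt{d}\,\norm{\nu}_\infty$) together yield exactly the stated bound, and your flag about the two possible noise models (i.i.d.\ coordinates versus the spherical density $p(\nu)\propto e^{-\epsilon\norm{\nu}/\Delta}$ of Chaudhuri et al.) is the right instinct, since the paper's notation is ambiguous on this point. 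One small correction to your secondary remark: in the spherical case the plain Chernoff bound on the $\mathrm{Gamma}(d,\Delta/\epsilon)$ moment generating function does \emph{not} recover the threshold $\frac{d\Delta}{\epsilon}\ln(d/\beta)$ --- already for $d=1$ the optimized bound evaluates to $e\,\beta\ln(1/\beta)$, which exceeds $\beta$ for small $\beta$. The clean route there (and the one actually used in Lemma 17 of Chaudhuri et al.) is to write the Gamma variable as a sum of $d$ i.i.d.\ exponentials and observe that if the sum exceeds $dt$ then some summand exceeds $t$; this reduces the spherical case to precisely the union-bound argument you already gave, so both interpretations are handled by the same proof.
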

\subsection{Proof of Theorem~\ref{thm:basicprivacy}}

\begin{proof}
At a high level, the privacy of this algorithm is ensured by the fact that any dataset of size $B$ is only reused as part of training sets of sizes $2B, 4B$ etc. Combining the sensitivity of regularized ERM being $O(\frac{1}{n})$ and the constant sized noise addition to each trained model, we obtain our result. 

Consider the entire dataset $D$ that is input to Algorithm~\ref{alg:multires}. Each datapoint in $(\mathbf{x}_i, y_i)\in D$ belongs to either a single dataset or no dataset of each size $s\in[B, 2B, ...., 2^kB]$. Each of these datasets of size $s$ is then used to train a model via Algorithm 1, with noise  scale $\Delta_{\epsilon}=\frac{4L}{\lambda B \epsilon}$. There is no overlap between datasets of the same size by construction, therefore each datapoint is used to train at most one model for each underlying datasize $s$.

By Lemma~\ref{lemma:output}, calling Algorithm~\ref{alg:output} with sensitivity $\Delta_{\epsilon}=\frac{4L}{\lambda B \epsilon}$ guarantees $\frac{\epsilon}{2}$-differential privacy for datasize $B$ and, more generally, $\frac{\epsilon}{2\cdot 2^k}$-differential privacy for datasize $2^kB$ or $k\in[0, \lfloor log_2(\frac{T}{B}) \rfloor]$.

 

By the sequential and parallel composition properties of differential privacy~\citep{dwork2006calibrating, mcsherry2009privacy}, the total privacy loss for any given datapoint is then given by the following geometric series: 
\begin{align*}
    \frac{\epsilon}{2}+\frac{\epsilon}{4}+... +\frac{\epsilon}{2\cdot2^{\lfloor log_2(\frac{T}{B}) \rfloor}} &= \epsilon\left(\frac{1}{2}+... +\frac{1}{2\cdot2^{\lfloor log_2(\frac{T}{B}) \rfloor}}\right) \\
    &\leq  \epsilon (1) \\
    &= \epsilon
\end{align*}

\end{proof}

\subsection{Proof of Theorem~\ref{thm:continualpriv2}}
\begin{proof}
The privacy of this Algorithm follows from the fact that each batch of data of size $b_0$ is either used once to perform a private update to an existing model, or as part of a sequence of updates using larger and larger datasets (similarly to Theorem~\ref{thm:basicprivacy}). 

Consider the disjoint batches of data of size $b_0$ over which Algorithm~\ref{alg:privateHTL} operates. For each of these batches of data, there are two usage scenarios within the Algorithm.

The first scenario is that a batch of data of size $b_0$ is used for a single update at time $t$ with datapoints $[t-b_0,t]$. The noise addition in Algorithm 3 then ensures $\frac{\epsilon}{2}$-differential privacy for this batch of data by Theorem~\ref{thm:continualpriv}. 

The second scenario is that the batch of data has been used in a sequence of updates of size $b_0, 2b_0, 4b_0, \cdots 2^jb_0$. Note, similarly to the proof of Theorem 3.1, a constant amount of noise of scale $\frac{4L}{\lambda b_0 \epsilon}$ is added regardless of underlying datasize. Therefore,  as in Theorem~\ref{thm:basicprivacy}, as the datasize increases the privacy cost of the computations decreases as follows:

\begin{align*}
    \frac{\epsilon}{2}+\frac{\epsilon}{4}+... +\frac{\epsilon}{2\cdot2^{\lfloor log_2(\frac{T}{B}) \rfloor}} &= \epsilon\left(\frac{1}{2}+... +\frac{1}{2\cdot2^{\lfloor log_2(\frac{T}{B}) \rfloor}}\right) \\
    &\leq  \epsilon (1) \\
    &= \epsilon
\end{align*}

In either case, the privacy loss is upper bounded by $\epsilon$.

\end{proof}

Note that the base model $f_b$ is not computed by Algorithm 4. Line 8 of Algorithm 4 shows that $f_b$ is obtained from the already existing output of Algorithm 2. Therefore the training procedure to obtain the model $f_b$ does not effect the privacy loss due to Algorithm 4 as the privacy guarantee of the output of a differentially private algorithm is immune to post-processing.  If we use Algorithm 4 and Algorithm 2 in sequence over the dataset, this will result in a total privacy guarantee of $2\epsilon$ by the sequential composition properties of differential privacy~\citep{mcsherry2009privacy}.

\subsection{Proof of Theorem~\ref{thm:continual_utility}}
\begin{proof}
We demonstrate the utility of this algorithm by combining the utility of the non private minimizer found via the biased regularization approach (motivated by existing work in non-private hypothesis transfer learning) with the loss in utility due to the addition of random noise.

Suppose $f^T$ is the private continual cumulative release ERM model output by Algorithm~\ref{alg:privateHTL} with corresponding weights $\mathbf{w}_T$. Note that $\mathbf{w}_T$ is obtained via the biased ERM method outlined in Algorithm~\ref{alg:privateERM} and denote the non-private weights obtained by this biased ERM procedure before the addition of random noise by $\mathbf{w}^T_{np}$. The true minimizer is denoted by $\mathbf{w}^*=\argmin_{\mathbf{w}}\mathbb{E}_{D^{new}\sim\mathcal{D}^{new}}[\ell(f(\mathbf{x}_i), y_i)]$ with corresponding expected loss $L(\mathbf{w}^*)$. We wish to bound (\ref{eq:43}).

\begin{align*}\label{eq:43}
    &\hat{L}_{D_{new}}(\mathbf{w}_T)-L(\mathbf{w}^*)\\ &=\hat{L}_{D_{new}}(\mathbf{w}_T)-\hat{L}_{D_{new}}(\mathbf{w}^T_{np})+\hat{L}_{D_{new}}(\mathbf{w}^T_{np})-L(\mathbf{w}^*)\\
\end{align*}

Due to \cite{wu2017bolt} (Lemma 11), using the Laplace Mechanism with noise scale $\nu\sim Lap(\frac{4L}{\lambda b_0\epsilon })$ to obtain $\mathbf{w}_{T}$,
\[
    \hat{L}_{D_{new}}(\mathbf{w}_T)-\hat{L}_{D_{new}}(\mathbf{w}^T_{np}) \leq L\norm \nu \\
\]
By Lemma~\ref{lemma:noise}, with probability at least $1-e^{-\eta}$,
\begin{equation}\label{eq:q1}
    \hat{L}_{D_{new}}(\mathbf{w}_T)-\hat{L}_{D_{new}}(\mathbf{w}^T_{np}) \leq \ln\left(\frac{d}{e^{-\eta}}\right)\frac{4dL^2}{\lambda b_0 \epsilon}
\end{equation}
Note that $\hat{L}_{D_{new}}(\mathbf{w}^T_{np})-L(\mathbf{w}^*) = \hat{L}_{D_{new}}(\mathbf{w}^T_{np})-\hat{L}_{D_{new}}(\mathbf{w}^*)+\hat{L}_{D_{new}}(\mathbf{w}^*)-L(\mathbf{w}^*)$. Also, by definition, as $\mathbf{w}^T_{np}$ is the \textit{empirical minimizer} we have that for base model $\mathbf{w}_g$:
\[\hat{L}_{D_{new}}(\mathbf{w}^T_{np})+\lambda\norm{\mathbf{w}^T_{np}-\mathbf{w}_g}^2\leq \hat{L}_{D_{new}}(\mathbf{w}^*)+\lambda\norm{\mathbf{w}^*-\mathbf{w}_g}^2\]
\begin{align*}\label{eq:q2}
    &\hat{L}_{D_{new}}(\mathbf{w}^T_{np})-\hat{L}_{D_{new}}(\mathbf{w}^*)\\ &\leq\lambda\norm{\mathbf{w}^*-\mathbf{w}_g}^2 - \lambda\norm{\mathbf{w}^T_{np}-\mathbf{w}_g}^2 \leq \lambda\norm{\mathbf{w}^*-\mathbf{w}_g}^2
\end{align*}
Recalling that $\lambda\leq\frac{1}{\norm{\mathbf{w}^*-\mathbf{w}_g}^2 b_0}$  we obtain:
\begin{equation}\label{eq:q3}
    \hat{L}_{D_{new}}(\mathbf{w}^T_{np})-\hat{L}_{D_{new}}(\mathbf{w}^*) \leq \frac{1}{b_0}
\end{equation}
Following~\cite{kuzborskij2013stability}, we then bound $\hat{L}_{D_{new}}(\mathbf{w}^*)-L(\mathbf{w}^*)$ using Bernstein's inequality. Let the variance be denoted by $\mathbb{V}$:
\[\mathbb{V}=\mathbb{E}\left[\sum_{i} (\ell(f^* (\mathbf{x}_i, y_i))-L(\mathbf{w}^*))^2\right]\]
By Bernstein's inequality:
\begin{align*}
&P\left( \sum_{i\in D_{new}} (\ell(f^* (\mathbf{x}_i, y_i))-L(\mathbf{w}^*))^2 > t\right) \\
&\leq exp\left(-\frac{t^2/2}{\mathbb{V}+\frac{1}{3}Mt}\right)\end{align*}
Let,
\[e^{-\eta}=exp\left(-\frac{t^2/2}{\mathbb{V}+\frac{1}{3}Mt}\right)\]
Then with probability at least $1-e^{-\eta}, \forall \eta \geq 0$, noting that $L(\mathbf{w}^*)\leq R_g$:
\begin{align}
\hat{L}_{D_{new}}(\mathbf{w}^*) &\leq L(\mathbf{w}^*)+\sqrt{\frac{2\eta\mathbb{E}[(\ell(f^* (\mathbf{x}_i, y_i))-L(\mathbf{w}^*))^2]}{m}}\\
&\hspace{1cm}+\frac{2M\eta}{3m}\\
&\leq\sqrt{\frac{2\eta L(\mathbf{w}^*)}{b_0}}+ \frac{ 1.5M \eta}{b_0}\\
&\leq \sqrt{\frac{2\eta R_g}{b_0}}+ \frac{ 1.5M \eta}{b_0}\label{eq:q4}
\end{align}

Combining (\ref{eq:q1}), (\ref{eq:q3}) and (\ref{eq:q4}) we obtain the result.

\end{proof}
\subsection{Proof of Theorem~\ref{thm:continual_lastutility}}
\begin{proof}
In order to bound the utility of the updated model on the previous batch of data, we will obtain an expression for the excess empirical risk caused by using the updated model on the previous batch of data. We consider two datasets corresponding to the  previous batch of data and the current batch of data respectively:
\begin{itemize}
    \item $D_b \sim \mathcal{D}_b$ of size $n_b$ used to train the source/base model $\mathbf{w}_b=\argmin_{\mathbf{w}} \hat{L}_{D_b}(\mathbf{w})$.
        \item $D_{new} \sim \mathcal{D}_{new}$ used to train fine-tuned model $\mathbf{w}_{new}=\argmin_{\mathbf{w}} \hat{L}_{D_{new}} (\mathbf{w})$.

\end{itemize}
Denote the true minimizers for these datasets by $\mathbf{w}^*_b=\argmin_{\mathbf{w}} \mathbb{E}_{D_b\sim\mathcal{D}_b}[L(\mathbf{w})]$ and $\mathbf{w}^*_{new}=\argmin_{\mathbf{w}} \mathbb{E}_{D_{new} \sim\mathcal{D}_{new}}[L(\mathbf{w})]$ respectively. 

We will bound the excess empirical error if the model $\mathbf{w}_{new}$ was used for a dataset $D_b \sim \mathcal{D}_b$ instead of its own empirical minimizer $\mathbf{w}_b$. 
\begin{align*}
 &|\hat{L}_{D_b}(\mathbf{w}_{new})-L(\mathbf{w}^*_b)|\\
&\leq|\hat{L}_{D_b}(\mathbf{w}_{new})-\hat{L}_{D_{b}}(\mathbf{w}_{b})|+|\hat{L}_{D_{b}}(\mathbf{w}_{b}) -L(\mathbf{w}^*_b)|
\end{align*}
\begin{align*}
 &|\hat{L}_{D_b}(\mathbf{w}_{new})-L(\mathbf{w}^*_b)|-|\hat{L}_{D_{b}}(\mathbf{w}_{b}) -L(\mathbf{w}^*_b)|\\ &\leq|\hat{L}_{D_b}(\mathbf{w}_{new})-\hat{L}_{D_{b}}(\mathbf{w}_{b})|
\end{align*}

Given that $\ell$ is $L$-Lipschitz continuous,
\begin{align}
   & \hat{L}_{D_b}(\mathbf{w}_{new})-\hat{L}_{D_{b}}(\mathbf{w}_{b}) \\
   &= \frac{1}{n_b}\sum_{i=1}^{n_b}\ell(\mathbf{w}_{new}, (\mathbf{x}_i,y_i))- \frac{1}{n_b}\sum_{i=1}^{n_b}\ell(\mathbf{w}_{b}, (\mathbf{x}_i,y_i))\\
    &= \frac{1}{n_b}\sum_{i=1}^{n_b}[\ell(\mathbf{w}_{new}, (\mathbf{x}_i,y_i))- \ell(\mathbf{w}_{b},(\mathbf{x}_i,y_i))] \\
    &\leq \frac{1}{n_b}\sum_{i=1}^{n_b} L \norm{\mathbf{w}_{new}- \mathbf{w}_{b}} \\
    &=L \norm{\mathbf{w}_{new}- \mathbf{w}_{b}} \\
    &\leq L( \norm{\mathbf{w}_{new}-\mathbf{w}^*_{new}}+\norm{\mathbf{w}^*_{new}-\mathbf{w}_{b}})\label{e0}
\end{align}
By the same reasoning as the proof of Theorem 4.3, recalling that $\ell$ is $\lambda$-strongly convex, $\lambda = \frac{1}{\norm{\mathbf{w}^*_{new}-\mathbf{w}_g}^2 b_0}$ and deriving (17) in the same manner as an identical result used in the proof of Theorem~\ref{thm:continual_utility},
\begin{align}
   & \norm{\mathbf{w}_{new}-\mathbf{w}^*_{new}} \\
   &\leq \sqrt{\frac{2}{\lambda}\norm{\hat{L}(\mathbf{w}_{new})
    -\hat{L}(\mathbf{w}^*_{new})}}\\
    &\leq \sqrt{\frac{2}{\lambda}\left[\ln\left(\frac{d}{e^{-\eta}}\right)\frac{4dL^2}{\lambda b_0 \epsilon}+\frac{1}{b_0}\right]} \\
    &\leq \sqrt{\frac{2}{\lambda b_0}\left[\ln\left(\frac{d}{e^{-\eta}}\right)\frac{4dL^2}{\lambda \epsilon}+1\right]}\\
    &\leq \sqrt{2\norm{\mathbf{w}^*_{new}-\mathbf{w}_b}^2 \left[\ln\left(\frac{d}{e^{-\eta}}\right)\frac{4dL^2}{\lambda \epsilon}+1\right]}\label{e1}
\end{align}
\end{proof}

Combining (\ref{e0}) and (\ref{e1}) we obtain the result. 
\subsection{Proof of Theorem~\ref{thm:sw-priv}}

\begin{proof}
To see the privacy of the sliding window algorithm, observe that each data batch (or data point) is used in three different ways. A data point $x$ is used to form a base model ($f^0$ in Fig.~\ref{fig:sliding}) from scratch; it is used to update/refine the current base model $f^0$ from the right; and it is used to update $f^0$ from the left. The cost in each of these phases can be bounded. The basic model formation is run once on any data point, resulting in constant privacy cost. On each side of $f^0$, a point contributes to $O(\log w)$ update computations in sets of sizes $n=w/2, w/4, w/8, \dots$ etc. Due to the ERM sensitivity bound $O(\frac{1}{\lambda n})$ (see Sec~\ref{sec:basic}), this sequence results in a constant privacy cost for the point in making updates. Below, we describe the details of this idea in reference to the description in Algorithm~\ref{alg:slidingwindow}.

Consider a time-step when the data partitioning in Algorithm~\ref{alg:slidingwindow} matches that of Figure 1 a) and b). This corresponds to the state of the algorithm after lines 3-5 in the pseudocode. 

Denote the dataset representing the window of size $w$ at this timestep by $D^w=\{\mathbf{x}_i, y_i : i\in[1,w]\}$. Let $w=\sum_{j=1}^J 2^jw_0$. We consider a single pass of the sliding window algorithm over $D^w$, before the next call of Algorithm~\ref{alg:slidingwindow} Line 5, when the  window is reset to the form shown in Figure 1 a) and b). Note that this extends the data used by the algorithm from just $D^w$ originally, to $D^{w^+}=\{\mathbf{x}_i, y_i : i\in[1,w+2^Jw_0-1]\}$ before the window is refreshed.

We consider the following three disjoints datasets for the $k^{th}$ pass of the sliding window algorithm. These datasets represent a partitioning of $D^{w^+}$ into the data before the base model ($f^0$ in Fig.~\ref{fig:sliding}), the data contained in the base model, and the data that arrives after the base model during this pass of the sliding window algorithm, before the window is refreshed:
\begin{itemize}
    \item $D_k^L=\{\mathbf{x}_i, y_i : i\in[1,(2^J-1)w_0]\}$
    \item $D_k^R=\{\mathbf{x}_i, y_i : i\in[(2^J-1)w_0+1,w]\}$
    \item $D_k^{R+}=\{\mathbf{x}_i, y_i : i\in[w+1, w+2^Jw_0-1]\}$
\end{itemize}

Note that at at the $(k-1)^{th}$ pass of the sliding window algorithm $D_{k-1}^R=D_k^L$ and $D_{k-1}^{R+}=D_k^R$. If we also consider the $(k+1)^{th}$ pass of the algorithm, then $D_{k+1}^L=D_k^R=D_{k-1}^{R+}$ and $D_{k+1}^{R}=D_k^{R+}$. Therefore over all passes of the sliding window algorithm, each of the disjoint datasets represented above will be used three times, as effectively $D_{k+1}^L, D_k^R$ and $D_{k+1}^{R+}$ for some value of $k$.

Due to this observation, the differential privacy guarantee for Algorithm~\ref{alg:slidingwindow} follows immediately if Algorithm~\ref{alg:slidingwindow} satisfies $\frac{\epsilon}{3}$-differential privacy over a single pass $k$ for each of these datasets.

Due to the privacy guarantee of Algorithm~\ref{alg:output}, outlined in the proof of Theorem~\ref{thm:basicprivacy},  Algorithm~\ref{alg:slidingwindow} satisfies $\frac{\epsilon}{3}$-differential privacy for dataset $D^R$. Only a single private empirical risk minimization algorithm with appropriately added noise is released using dataset $D^R$ before a window refresh. 

The privacy guarantee for Algorithm~\ref{alg:slidingwindow} over $D^{R+}$ and $D^L$ are identical as both call Algorithm~\ref{alg:privateERM} for models of sizes $2^0, .. 2^{J-1}$ using Laplace noise with scale $\frac{12L}{\epsilon w_0 \lambda}$. This sequence of calls satisfies $\frac{\epsilon}{3}$-differential privacy over one pass of the sliding window algorithm due to the argument previously used in Theorem~\ref{thm:basicprivacy}, at it results in privacy loss via sequential composition of:
\begin{align}
    \frac{\epsilon}{6}+\frac{\epsilon}{12}+\frac{\epsilon}{24}+\cdots &= \frac{\epsilon}{3}\left( \frac{\epsilon}{2}+\frac{\epsilon}{4}+ \frac{\epsilon}{8}+\cdots\right) \\
    \leq \frac{\epsilon}{3}(1)
\end{align}
\end{proof}

\subsection{Proof of Theorem~\ref{thm:sliding_utility }}
\begin{proof}
This result is obtained via similar reasoning to Theorem~\ref{thm:continual_utility}, recognizing that the final model output for a window is essentially the result of private regularized empirical risk minimization using the regularizer described in Equation~\ref{eq:regerm}. In the sliding window scenario, the weights $\mathbf{w}_g$ used in the regularizer correspond to the most recently updated larger block of data within the window (e.g. model $f^2$ in Figure~\ref{fig:sliding} a) or model $f^5$ in Figure~\ref{fig:sliding}) d). 

Suppose $f^w$ is the output of Algorithm~\ref{alg:slidingwindow} for a given window of data, with corresponding weights $\mathbf{w}$. Note that $\mathbf{w}$ is the result of Algorithm~\ref{alg:privateERM} using noise of scale $\frac{6L}{\lambda w_0 \epsilon}$. Denote the non-private weights found by the biased ERM method in  Algorithm~\ref{alg:privateERM} before the addition of random noise by $\mathbf{w}_{np}$. Denote the true minimizer of the objective function before the addition of random noise by $\mathbf{w}^*$, with corresponding risk $L(\mathbf{w}^*)$. $\hat{L}_D(\mathbf{w})$ denotes the empirical risk. We will bound the expanded expression outlined by (\ref{eq:52}).

\begin{align*}\label{eq:52}
    &\hat{L}_{D_{new}}(\mathbf{w})-L(\mathbf{w}^*) \\ &=\hat{L}_{D_{new}}(\mathbf{w})-\hat{L}_{D_{new}}(\mathbf{w}_{np})+\hat{L}_{D_{new}}(\mathbf{w}_{np})-L(\mathbf{w}^*) 
\end{align*}

Due to \cite{wu2017bolt} (Lemma 11), using the Laplace Mechanism with noise scale $\nu\sim Lap(\frac{12L}{\lambda w_0\epsilon })$ to obtain $\mathbf{w}$,
\[
    \hat{L}_{D_{new}}(\mathbf{w})-\hat{L}_{D_{new}}(\mathbf{w}_{np}) \leq L\norm \nu \\
\]
By Lemma~\ref{lemma:noise} with probability at least $1-e^{-\eta}$,
\begin{equation}\label{eq:p1}
    \hat{L}_{D_{new}}(\mathbf{w})-\hat{L}_{D_{new}}(\mathbf{w}_{np}) \leq \ln\left(\frac{d}{e^{-\eta}}\right)\frac{12dL^2}{\lambda w_0 \epsilon}
\end{equation}
Note that $\hat{L}_{D_{new}}(\mathbf{w}_{np})-L(\mathbf{w}^*) = \hat{L}_{D_{new}}(\mathbf{w}_{np})-\hat{L}_{D_{new}}(\mathbf{w}^*)+\hat{L}_{D_{new}}(\mathbf{w}^*)-L(\mathbf{w}^*)$. Also, by definition, as $\mathbf{w}_{np}$ is the \textit{empirical minimizer}:
\begin{align*}
&\hat{L}_{D_{new}}(\mathbf{w}_{np})+\lambda\norm{\mathbf{w}_{np}-\mathbf{w}_g}^2\\
&\leq \hat{L}_{D_{new}}(\mathbf{w}^*)+\lambda\norm{\mathbf{w}^*-\mathbf{w}_g}^2
\end{align*}

\begin{align*}\label{eq:p2}
    &\hat{L}_{D_{new}}(\mathbf{w}_{np})-\hat{L}_{D_{new}}(\mathbf{w}^*)\\ &\leq\lambda\norm{\mathbf{w}^*-\mathbf{w}_g}^2 - \lambda\norm{\mathbf{w}_{np}-\mathbf{w}_g}^2\\
    &\leq \lambda\norm{\mathbf{w}^*-\mathbf{w}_g}^2
\end{align*}
Recalling that $\lambda\leq\frac{1}{\norm{\mathbf{w}^*-\mathbf{w}_g}^2 w_0}$ we obtain:
\begin{equation}\label{eq:p3}
    \hat{L}_{D_{new}}(\mathbf{w}_{np})-\hat{L}_{D_{new}}(\mathbf{w}^*) \leq \frac{1}{w_0}
\end{equation}
Following~\cite{kuzborskij2013stability}, we then bound $\hat{L}_{D_{new}}(\mathbf{w}^*)-L(\mathbf{w}^*)$ using Bernstein's inequality in the same way as Theorem~\ref{thm:continual_utility}:
\begin{equation}\label{eq:p4}
\hat{L}_{D_{new}}(\mathbf{w}^*)-L(\mathbf{w}^*) \leq\sqrt{\frac{2\eta L(\mathbf{w}^*)}{w_0}}+ \frac{ 1.5M \eta}{b_0}
\end{equation}

Combining (\ref{eq:p1}), (\ref{eq:p3}) and (\ref{eq:p4}) we obtain the result.

\end{proof}
\section{Sampling Based Extensions of Algorithms}
Algorithm~\ref{alg:multires} releases multi-resolution models over incoming data, exploiting the fact that regularized empirical risk minimization algorithms have sensitivity  $O\left(\frac{1}{n}\right)$ to ensure only constant privacy loss over possibly infinite time-scales. However, a consequence of this approach is that the same amount of noise is added to all models trained, regardless of their underlying datasize. The addition of a  sampling procedure discussed in this section allows us to instead add less noise for larger datasizes while also maintaining constant privacy loss. We first outline this extension for multi-resolution release and then discuss how the same approach can also be applied to continual cumulative and sliding window release.

\subsection{Sampling for Multi-Resolution Release}
Suppose our goal in this section to add datasize specific noise of $\frac{4L}{\lambda 2^kB \epsilon}$ for datasize $2^kB$ in every call of the private ERM method in Algorithm~\ref{alg:output}. This change alone results in an increase of privacy cost to $\frac{\epsilon}{2}$ for every call of Algorithm~\ref{alg:output}, and therefore over time the privacy loss increases by a constant amount each time a batch of data is used. Note that each batch of data will be used up to $log_2(T/B)$ times by time-step $T$ in the original version of multi-resolution release outlined in Algorithm~\ref{alg:multires}.

Now consider the addition of the sampling step outlined in Algorithm~\ref{alg:multires_sampling} lines 4-6 to the original version of the multi-resolution release algorithm (Algorithm~\ref{alg:multires}). In this step we sample each dataset of size $2^kB$ with probability $p_0=\frac{1}{2^k}$ for each item. This results in a privacy guarantee of $p_0 \epsilon'$ for any call of Algorithm~\ref{alg:multires_sampling} line 6, if the original (without sampling) privacy guarantee for that step was $\epsilon'$~\citep{ kellaris2013practical, beimel2014bounds, balle2018privacy}. 

We then combine the sampling step and updated noise scale, and obtain a privacy guarantee of $\frac{\epsilon}{2\cdot 2^k}$ for each call of Algorithm ~\ref{alg:output} in line 6 of Algorithm~\ref{alg:multires_sampling}.  This results in a guarantee of $\epsilon$-differential privacy overall by the same argument as Theorem~\ref{thm:basicprivacy}. 

The advantage of adding this sampling procedure is that less random noise is added to the model for larger datasizes, however the disadvantage is that every dataset has an expected size of $B$, potentially impacting the accuracy of the non-private ERM solution before the noise is added. This variant of the algorithm will therefore be most useful when $B$ is large enough to guarantee strong non-private performance. 

\begin{algorithm}
\caption{Private Multi-Resolution Release with Sampling}
\begin{algorithmic}[1]
  \scriptsize
  \STATE Input: $D=\{(\mathbf{x}_t, y_t) \}$ for $t\in \mathbb{Z}^+$, $n=|D|$, block size $B$, sensitivity constant $C=\frac{4L}{\lambda B \epsilon}$ where $L$ is the Lipschitz constant.
   \FOR{$t\in \mathbb{Z}^+$} 
        \IF{$t=2^kiB$ for $k\in[0, K]$, $i\in\mathbb{Z}^+$}
        \STATE{$D=\{(\mathbf{x}_j, y_j) | j\in [t-2^kB+1, t]\}$}
        \STATE {Obtain $D_{sample}$ by sampling without replacement from $D$ with probability $\frac{\exp{\big(\frac{\epsilon }{2\cdot 2^k}\big)}-1}{\exp{\big(\frac{\epsilon}{2}\big)}-1}$.} 

            \RETURN {  $\mathbf{w} \leftarrow PSGD(D_{sample}, \Delta_{\epsilon}=\frac{4L}{\lambda 2^kB \epsilon} )$ (Algorithm~\ref{alg:output})}  
        \ENDIF
   \ENDFOR
\end{algorithmic}
  \label{alg:multires_sampling}
\end{algorithm}

\begin{theorem}\label{thm:multires-sampling-privacy}
Algorithm~\ref{alg:multires_sampling} satisfies $\epsilon$-differential privacy
\end{theorem}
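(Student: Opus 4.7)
The plan is to combine three ingredients already used in Theorem~\ref{thm:basicprivacy}: (i) the base privacy guarantee of Algorithm~\ref{alg:output} at each resolution, (ii) privacy amplification by subsampling, and (iii) parallel/sequential composition across resolutions. The only difference from the proof of Theorem~\ref{thm:basicprivacy} is that the sampling step drives the per-resolution privacy cost down in the required geometric fashion even though the noise scale is now adaptive.

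First I would fix a resolution $k$ and a data point. By Lemma~\ref{lemma:output}, running Algorithm~\ref{alg:output} on a dataset of size $2^k B$ with noise scale $\Delta_{\epsilon}=\frac{4L}{\lambda 2^k B \epsilon}$ is equivalent to invoking it with parameter $\epsilon/2$, so the release at line~6 is $(\epsilon/2)$-DP on its input. Next I would apply the standard pure-DP amplification-by-sampling lemma (\citet{balle2018privacy}, also \citet{kellaris2013practical,beimel2014bounds}): if a mechanism is $\epsilon'$-DP and we first subsample each record independently with probability $p$, the resulting mechanism is $\ln(1+p(e^{\epsilon'}-1))$-DP. Plugging in $\epsilon'=\epsilon/2$ and the sampling rate
\[
p \;=\; \frac{\exp(\epsilon/(2\cdot 2^k))-1}{\exp(\epsilon/2)-1}
\]
gives $\ln(1+p(e^{\epsilon/2}-1)) = \ln(e^{\epsilon/(2\cdot 2^k)}) = \epsilon/(2\cdot 2^k)$. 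So the cost of participation at resolution $k$ is exactly $\epsilon/(2\cdot 2^k)$, matching the geometric decay already exploited in Theorem~\ref{thm:basicprivacy}.

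Next I would invoke the same composition argument as in Theorem~\ref{thm:basicprivacy}. At each resolution $k$ the blocks of size $2^k B$ are disjoint by construction, so each data point appears in at most one block per resolution; parallel composition therefore makes the level-$k$ cost $\epsilon/(2\cdot 2^k)$ independently of which block contains the point. Summing over the $O(\log(T/B))$ active resolutions via sequential composition yields
\[
\sum_{k=0}^{\lfloor \log_2(T/B)\rfloor} \frac{\epsilon}{2\cdot 2^k} \;\le\; \frac{\epsilon}{2}\sum_{k=0}^{\infty} \frac{1}{2^k} \;=\; \epsilon,
\]
which is the claimed bound, uniform over $T$.

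The main obstacle is justifying the subsampling amplification: one must be careful that the mechanism actually sees only the subsampled set (which it does, since Algorithm~\ref{alg:output} is invoked on $D_{\text{sample}}$) and that the sampling is performed with respect to the neighboring-dataset notion used throughout the paper (swap/add-remove at a single record). Once the amplification lemma is applied in the ``add/remove one record'' model, the geometric telescoping is identical to the proof of Theorem~\ref{thm:basicprivacy} and requires no new calculation.
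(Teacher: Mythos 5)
Your proposal is correct and follows essentially the same route as the paper's own proof: an $\epsilon/2$-DP base release at each resolution, amplification by subsampling to bring the level-$k$ cost down to $\epsilon/(2\cdot 2^k)$, and the same geometric composition as in Theorem~\ref{thm:basicprivacy}. If anything, you are more explicit than the paper, which merely cites the amplification results, whereas you verify that the stated sampling probability $p=\bigl(e^{\epsilon/(2\cdot 2^k)}-1\bigr)/\bigl(e^{\epsilon/2}-1\bigr)$ yields exactly $\ln\bigl(1+p(e^{\epsilon/2}-1)\bigr)=\epsilon/(2\cdot 2^k)$.
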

\begin{proof}

For each of the datasets of size $2^kB$ for $k\in[0, \lfloor log_2(\frac{T}{B}) \rfloor]$ a model is released via the Laplace mechanism in Algorithm~\ref{alg:output}, adding Laplace Noise with sensitivity $\frac{4L}{\lambda 2^kB \epsilon}$. Denote Algorithm 1 by $A^1$. $A^1$ satisfies $\frac{\epsilon}{2}$-differential privacy for datasize $2^kB$~\citep{chaudhuri2011differentially, bassily2014private, wu2017bolt}.

Denote the composition of the sampling procedure outlined in Algorithm~\ref{alg:multires_sampling} (lines 4-6) with $A^1$ by $A^1_{samp}$. $A^1_{samp}$  satisfies $\frac{\epsilon}{2\cdot 2^k}$-differential privacy ~\citep{kellaris2013practical, beimel2014bounds, balle2018privacy}.


As outlined in the proof of Theorem~\ref{thm:basicprivacy}, due to the composition properties of differential privacy~\citep{dwork2006calibrating, mcsherry2009privacy}, the total privacy loss for each block of size $B$ is then given by $\frac{\epsilon}{2}+\frac{\epsilon}{4}+... +\frac{\epsilon}{2\cdot2^{\lfloor log_2(\frac{T}{B}) \rfloor}}$. By the properties of geometric series we obtain: 
\begin{align*}
    &\frac{\epsilon}{2}+\frac{\epsilon}{4}+... +\frac{\epsilon}{2\cdot2^{\lfloor log_2(\frac{T}{B}) \rfloor}} \\
    &= \epsilon\left(\frac{1}{2}+\frac{1}{4}+... +\frac{1}{2\cdot2^{\lfloor log_2(\frac{T}{B}) \rfloor}}\right) \\
    &\leq  \epsilon (1) \\
    &= \epsilon
\end{align*}

Therefore, Algorithm~\ref{alg:multires_sampling} satisfies $\epsilon$-differential privacy.

\end{proof}
Theorem~\ref{thm:basicutility_sampling} outlines the utility of this approach for a given datasize, derived by directly combining Theorem~\ref{thm:basicutility} with the updated noise scale.. Note that $\mathbb{E}[n']=B$.
\begin{theorem}[\textbf{\cite{wu2017bolt}}] . 
Consider 1-pass private stochastic gradient descent via Algorithm~\ref{alg:output} for $\beta$-smooth and $L$-Lipschitz loss function $\ell$ with $\Delta_{\epsilon}=\frac{4L}{\lambda 2^k B \epsilon}$. Suppose $sup_{\mathbf{w}\in\mathcal{W}}\|\ell'(\mathbf{w})\|\leq G$, $\norm{\mathbf{x}_1}\leq1$, $\mathcal{W}$ has diameter $R$ and $\mathbf{w}\in\mathbb{R}^d$. Then, for an original dataset of size $2^kB$ with $k\in[0, \lfloor log_2(\frac{T}{B})\rfloor]$ and a sampled dataset of size $n'$:
\begin{align*}
   & \mathbb{E}[\hat{L}_D(\mathbf{w}_{priv})-\hat{L}_D(\mathbf{w}^*)]\\
   &\leq \frac{((L+\beta R^2)+G^2)\log(n')}{\lambda n'}+\frac{4dG^2}{\epsilon \lambda 2^kB}
\end{align*}
\label{thm:basicutility_sampling}
\end{theorem}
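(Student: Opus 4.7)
The plan is to observe that Theorem~\ref{thm:basicutility_sampling} is a direct specialization of Theorem~\ref{thm:basicutility} (from \citep{wu2017bolt}) applied after the sampling step, so the proof should reduce to a substitution rather than redoing the one-pass SGD analysis from scratch. The only new ingredient compared with Theorem~\ref{thm:basicutility} is the smaller noise scale $\Delta_\epsilon = 4L/(\lambda 2^k B \epsilon)$ enabled by subsampling, and the fact that the ``effective'' training set has the realized sampled size $n'$ rather than the original size $2^k B$.

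First I would note that, conditional on the realized sample $D_{\text{sample}}$ of size $n'$, Line~6 of Algorithm~\ref{alg:multires_sampling} is exactly one invocation of $PSGD$ (Algorithm~\ref{alg:output}) on a dataset of size $n'$ with Laplace perturbation of scale $\Delta_\epsilon = 4L/(\lambda 2^k B \epsilon)$. The hypotheses of Theorem~\ref{thm:basicutility} ($\beta$-smoothness, $L$-Lipschitzness, $\|\ell'(\mathbf{w})\| \le G$, $\|\mathbf{x}\|_1 \le 1$, diameter $R$) are preserved under subsampling, so Theorem~\ref{thm:basicutility} applies verbatim to $D_{\text{sample}}$ with dataset size $n'$ and noise scale $\Delta_\epsilon$.

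Second, I would invoke that theorem and identify where each of the two terms in the target bound comes from. In the proof of Theorem~\ref{thm:basicutility} the excess empirical risk $\mathbb{E}[\hat{L}_D(\mathbf{w}_{priv}) - \hat{L}_D(\mathbf{w}^*)]$ splits into (i) a non-private one-pass SGD optimization/stability term that scales as $((L + \beta R^2) + G^2)\log(n)/(\lambda n)$ with the training-set size $n$, and (ii) an additive perturbation term obtained from Lipschitzness and the expected norm of a $d$-dimensional Laplace vector, contributing $O(d G^2 / (\epsilon \lambda n_{\text{noise}}))$ where $n_{\text{noise}}$ is the denominator in the noise scale. Plugging in $n = n'$ for the sampled run and the updated $\Delta_\epsilon$ with denominator $2^k B$ reproduces exactly the claimed $((L+\beta R^2)+G^2)\log(n')/(\lambda n') + 4 d G^2 / (\epsilon \lambda 2^k B)$.

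The only subtlety, and it is a bookkeeping one rather than a real obstacle, is that the claim is stated for a realization of $n'$ rather than integrated over the randomness of subsampling. Since the Bernoulli sampling and the Laplace mechanism are independent, the two sources of randomness compose cleanly: one first conditions on $D_{\text{sample}}$, invokes Theorem~\ref{thm:basicutility}, and (if desired) afterwards takes an outer expectation over $n'$ using $\mathbb{E}[n'] = B$, applying Jensen's inequality to the $\log(n')/n'$ term. Since Theorem~\ref{thm:basicutility_sampling} is explicitly attributed to \citep{wu2017bolt}, no new proof technique is required; the main thing to verify is that replacing the original denominator $B$ by $2^k B$ in $\Delta_\epsilon$ flows through every step of the Wu et al.\ derivation without interacting with the SGD-stability part, which depends only on $n'$ and the loss constants.
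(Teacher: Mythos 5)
Your proposal is correct and matches the paper's own justification, which consists of the single remark that the bound is ``derived by directly combining Theorem~\ref{thm:basicutility} with the updated noise scale'' (noting $\mathbb{E}[n']=B$): the first term is the Wu et al.\ SGD term evaluated at the realized sampled size $n'$, and the second is the Laplace perturbation term with the denominator $2^kB$ from the new $\Delta_\epsilon$. Your extra care about conditioning on $D_{\text{sample}}$ before invoking Theorem~\ref{thm:basicutility} is a slightly more explicit bookkeeping of the same substitution, not a different route.
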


\subsection{Sampling for Continual Release}
We can add a similar sampling step to the Continual Release method when the `larger update steps' in lines 10-11 of Algorithm~\ref{alg:privateHTL} are called. 
\begin{algorithm}[H]
\caption{Private Biased Regularized ERM (PBERM)}
\begin{algorithmic}[1]
  \scriptsize
  \STATE Input: Model to update $f_g$ parameterized by weights $\mathbf{w}_g$, new data $D$ of size $2^jb_0$ , regularization constant $\lambda>0$, privacy parameter $\epsilon$, Lipschitz constant $L$.
    \STATE{$\mathbf{w}^T=\argmin_{\mathbf{w}\in\mathcal{W}} \frac{1}{n}\sum_{i\in[1,n]} \ell(f(\mathbf{x}_i),y_i) + \lambda \norm{\mathbf{w} - \mathbf{w}_g}^2$}
    \RETURN{$\mathbf{w}^T+\nu$ where $\nu\sim Lap\left(\frac{4L}{\lambda 2^jb_0 \epsilon}\right)$}
\end{algorithmic}
  \label{alg:pb-erm-sampling}
\end{algorithm}

\begin{algorithm}[H]
\caption{Private Continual Release}
\begin{algorithmic}[1]
  \scriptsize
  \STATE Input: Dataset, regularization constant $\lambda>0$, privacy parameter $\epsilon$, Lipschitz constant $L$, update batch size $b_0$, block size $B$.
  \STATE{Initialize two models: $f_{b}$ and $f_l$}
  \STATE{Initialize $I_b = 0$}
  \FOR{$t\in \{B, B+1, ...\}$}
        \STATE{/* Execute the first case that matches. */}
        \CASE {$t$ is of the form $2^{i}B$ for $i\in \mathbb{Z}^{+}$}
            \STATE{Delete the existing models $f_b$ and $f_l$.}
            \STATE{Obtain the model $f_b$ for the data in the range $[0:t]$ using the previous output from Algorithm~\ref{alg:multires}.}
            \STATE{Set $f_l = f_b$ and $I_b = t$}
        \ENDCASE
        \CASE{$t$ is of the form $2^{j}b_0$ for $j\in \mathbb{Z}^{+}$}
                \STATE {Obtain $D_{sample}$ by sampling without replacement from data $[I_b: t]$ with probability $\frac{\exp{\big(\frac{\epsilon }{2\cdot 2^j}\big)}-1}{\exp{\big(\frac{\epsilon}{2}\big)}-1}$.} 

            \STATE{Learn $f_l$ with the data $D_{sample}$  and regularize with model $f_b$ using Algorithm~\ref{alg:pb-erm-sampling}}.
        \ENDCASE
        \CASE{$t$ is of the form $kb_0$ for $k\in \mathbb{Z}^{+}$}
            \STATE{Learn a model with data $[t-b_0: t]$ and regularized with model $f_l$ using Algorithm~\ref{alg:pb-erm}.}
        \ENDCASE
        \STATE{The last trained model is to be used for inference.}
  \ENDFOR
\end{algorithmic}
  \label{alg:privateHTL-sampling}
\end{algorithm}
\begin{theorem}\label{thm:cont-sampling-privacy}
Algorithm~\ref{alg:privateHTL-sampling} satisfies $\epsilon$-differential privacy.
\end{theorem}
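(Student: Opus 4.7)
The plan is to mirror the proof of Theorem~\ref{thm:continualpriv2} almost line by line, replacing each large-block update step by its sampled counterpart and inserting the privacy-amplification-by-subsampling argument used inside the proof of Theorem~\ref{thm:multires-sampling-privacy}. First I would observe that in Algorithm~\ref{alg:privateHTL-sampling}, Case~1 only reads $f_b$ obtained from Algorithm~\ref{alg:multires}; since $f_b$ is already a privately released object, its reuse here is pure post-processing and therefore adds nothing to the internal privacy ledger of Algorithm~\ref{alg:privateHTL-sampling} (its cost is accounted for once, via Theorem~\ref{thm:multires-sampling-privacy} or Theorem~\ref{thm:basicprivacy}, outside this theorem). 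Only Case~2 (Algorithm~\ref{alg:pb-erm-sampling} on a sampled prefix) and Case~3 (Algorithm~\ref{alg:pb-erm} on the latest $b_0$ block) access raw data directly.

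Next, I would fix an arbitrary block $b$ of $b_0$ consecutive arrivals and track all appearances of $b$ between two consecutive resets $I_b$ and $2I_b$. By the disjoint scheduling of Case~3, $b$ participates in exactly one Case~3 update, which by Lemma~\ref{lemma:output} (noise scale $\tfrac{4L}{\lambda b_0 \epsilon}$ with sensitivity $\tfrac{2L}{\lambda b_0}$) is $\tfrac{\epsilon}{2}$-differentially private. Within the same reset interval, $b$ also contributes to a geometric cascade of Case~2 updates on sets of sizes $2b_0, 4b_0, \ldots, 2^j b_0, \ldots$, exactly as in the proof of Theorem~\ref{thm:continualpriv2}.

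For each Case~2 call at scale $2^j b_0$, Algorithm~\ref{alg:pb-erm-sampling} uses Laplace noise of scale $\tfrac{4L}{\lambda 2^j b_0 \epsilon}$, which by Lemma~\ref{lemma:output} gives a $\tfrac{\epsilon}{2}$-DP guarantee \emph{before} subsampling. I would then invoke the Bernoulli-subsampling amplification used in Theorem~\ref{thm:multires-sampling-privacy} (Kellaris et al., Balle et al.): sampling each record independently with probability $p_j = (e^{\epsilon/(2\cdot 2^j)} - 1)/(e^{\epsilon/2} - 1)$ amplifies the $\tfrac{\epsilon}{2}$-DP guarantee to $\tfrac{\epsilon}{2\cdot 2^j}$-DP, which is precisely why the probability in line~11 of Algorithm~\ref{alg:privateHTL-sampling} was chosen this way.

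Finally I would sequentially compose the contributions for the tracked block:
\begin{equation*}
\frac{\epsilon}{2} \;+\; \sum_{j \ge 1} \frac{\epsilon}{2 \cdot 2^j} \;\le\; \frac{\epsilon}{2} + \frac{\epsilon}{2} \;=\; \epsilon,
\end{equation*}
recovering the same geometric-series cancellation that drives Theorem~\ref{thm:basicprivacy} and Theorem~\ref{thm:continualpriv2}. Since $b$ was arbitrary and distinct blocks are treated independently within each reset cycle (and a data point lives in only one such cycle before the next Case~1 reset), the overall guarantee is $\epsilon$-DP. The main obstacle I foresee is the bookkeeping: one must verify that (i) Case~1 really is post-processing of an already-released object (so its cost is not double-counted), and (ii) a single $b_0$-block indeed appears in at most one Case~2 update per scale $2^j$ within a reset interval, so that the geometric sum really terminates at $\tfrac{\epsilon}{2}$. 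Both facts follow from the "first matching case" scheduling and the reset rule in Case~1, but they require care to state precisely.
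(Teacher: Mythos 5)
Your proposal follows essentially the same route as the paper, which disposes of this theorem in a single remark: invoke the reasoning of Theorem~\ref{thm:continualpriv2} for the case structure and geometric composition, and import from the proof of Theorem~\ref{thm:multires-sampling-privacy} the fact that each sampled call of lines 11--12 is $\frac{\epsilon}{2\cdot 2^j}$-differentially private. Your write-up simply makes explicit the bookkeeping (post-processing of $f_b$, the single Case~3 update at $\frac{\epsilon}{2}$, and the cascade summing to $\sum_{j}\frac{\epsilon}{2\cdot 2^j}\le\epsilon$) that the paper leaves implicit, and it inherits the same implicit assumptions about the subsampling amplification step as the paper does.
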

Theorem~\ref{thm:cont-sampling-privacy} follows by the same reasoning as Theorem~\ref{thm:continualpriv2} combined with the fact that, as seen in the proof of Theorem~\ref{thm:multires-sampling-privacy}, any call of lines 11-12 satisfies $\frac{\epsilon}{2\cdot 2^j}$ differential privacy. The utility results will be equivalent to the original version of the continual release algorithm, except that for the `larger update steps' the bounds in Theorem~\ref{thm:continual_utility} using sampled datasize $n'$ will be given by (\ref{eq:c2}) for $\lambda\leq\frac{1}{\norm{\mathbf{w}^*-\mathbf{w}_g}^2 n'}$.
\begin{align*}\label{eq:c2}
    & \hat{L}_{D_{new}}(\mathbf{w}_{new})-L(\mathbf{w}^*) \\
    &\leq \sqrt{\frac{2\eta R_{g}}{n'}}+ \frac{ 1.5M \eta+1}{n'}+\ln\left(\frac{d}{e^{-\eta}}\right)\frac{4dL^2}{\lambda 2^jb_0 \epsilon}
\end{align*}
This updated bound is obtained by directly substituting the new datasize and noise scale into the existing proof. Observing that $\frac{1}{2^jb_0}\leq \frac{1}{n'}$ we can see that Algorithm~\ref{alg:privateHTL-sampling} also satisfies Theorem~\ref{thm:continual_lastutility} for $\lambda=\frac{1}{\norm{\mathbf{w}^*-\mathbf{w}_g}^2 n'}$.

\subsection{Sampling for Sliding Window Release}

The sampling adaptation can also be applied to the sliding window algorithm. In this case, there are two options for how to introduce sampling. The first option is to only alter (introduce sampling on) calls to the update steps operating via biased ERM update step via Algorithm~\ref{alg:pb-erm}. Sampled databases obtained with sampling probability $\frac{1}{2^j}$ would be used during the fine-tuning steps over the dependency chain with noise scale $\frac{4L}{\lambda 2^j b_0 \epsilon}$. The second option is to also alter every call to the base model ($f^0$ in Fig.~\ref{fig:sliding}) computed via Algorithm 1, with similar adaptations. 

\subsubsection{Fine-Tuning Sampling}
In this scenario, we replace Algorithm~\ref{alg:slidingwindow} Lines 5, 14 and 15 (updates to the base model) with the same sampling adjustments as the multi-resolution and continual cumulative release algorithms:
\begin{itemize}
    \item We sample the dataset wihtout replacement for each model update of size $2^jw_0$ using sampling probability $\frac{1}{2^j}$.
    \item We call Algorithm 7 instead of Algorithm 3 to perform the regularized ERM, using noise proportional to $\frac{1}{2^jw_0}$ instead of $\frac{1}{w_0}$. 
\end{itemize}
The proof of Theorem~\ref{thm:sw-priv} can be directly extended to demonstrate the privacy properties of sliding window release with the sampling adjustments listed above. Note that, as in the proof of Theorem~\ref{thm:multires-sampling-privacy}, these two adjustments used together result in the same sequence of privacy loss values as the original version of the sliding window algorithm for $D^{R+}$ and $D^L$ :
\begin{align}
    \frac{\epsilon}{6}+\frac{\epsilon}{12}+\frac{\epsilon}{24}+\cdots &= \frac{\epsilon}{3}\left( \frac{\epsilon}{2}+\frac{\epsilon}{4}+ \frac{\epsilon}{8}+\cdots\right) \\
    \leq \frac{\epsilon}{3}(1)
\end{align}
\subsection{Further Notes on Sampling}
The above examples demonstrate that existing results on amplification by sampling can be directly incorporated into our techniques. Specifically, we considered the method of sampling without replacement. Other methods such as amplification via Poisson sub-sampling~\cite{li2012sampling} can be included in the same way.  

\section{Experiment on basic continual release}
We ran experiments on a {\em basic continual release method}, which trains a new model on the cumulative data at each $t$ of the form $2^kB$ where $k\in \mathbb{Z}^+$. Further, at other continual release times, $t$, we release a model trained on the data at $[t-b_0 : t]$ and regularized using the model from $[t-b_0]$ using Algorithm~\ref{alg:privateERM}. In other words, it is the same as the continual cumulative release algorithm without the larger update steps. 

Figure~\ref{fig:continual-release-basic} evaluates the above algorithm for different values of $\epsilon$ and $\lambda$ for the MNIST dataset. The accuracy of the private classifier with $\epsilon\ge 0.1$ matches the accuracy of the non-private classifier. With increasing $\lambda$ the accuracy increases.

\begin{figure}[!ht]
\centering
\includegraphics[width=2.5in]{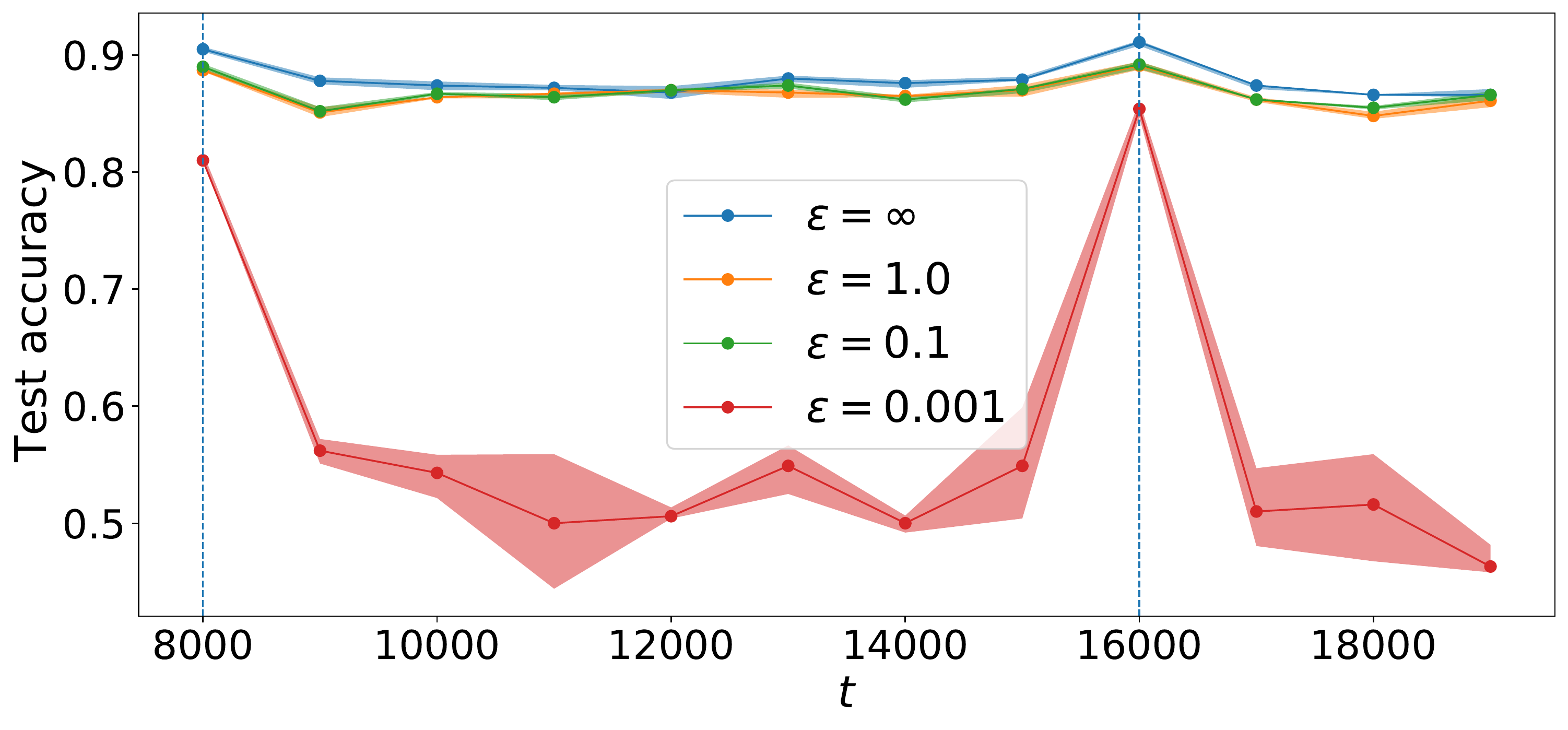}\\
(a)\\
\includegraphics[width=2.5in]{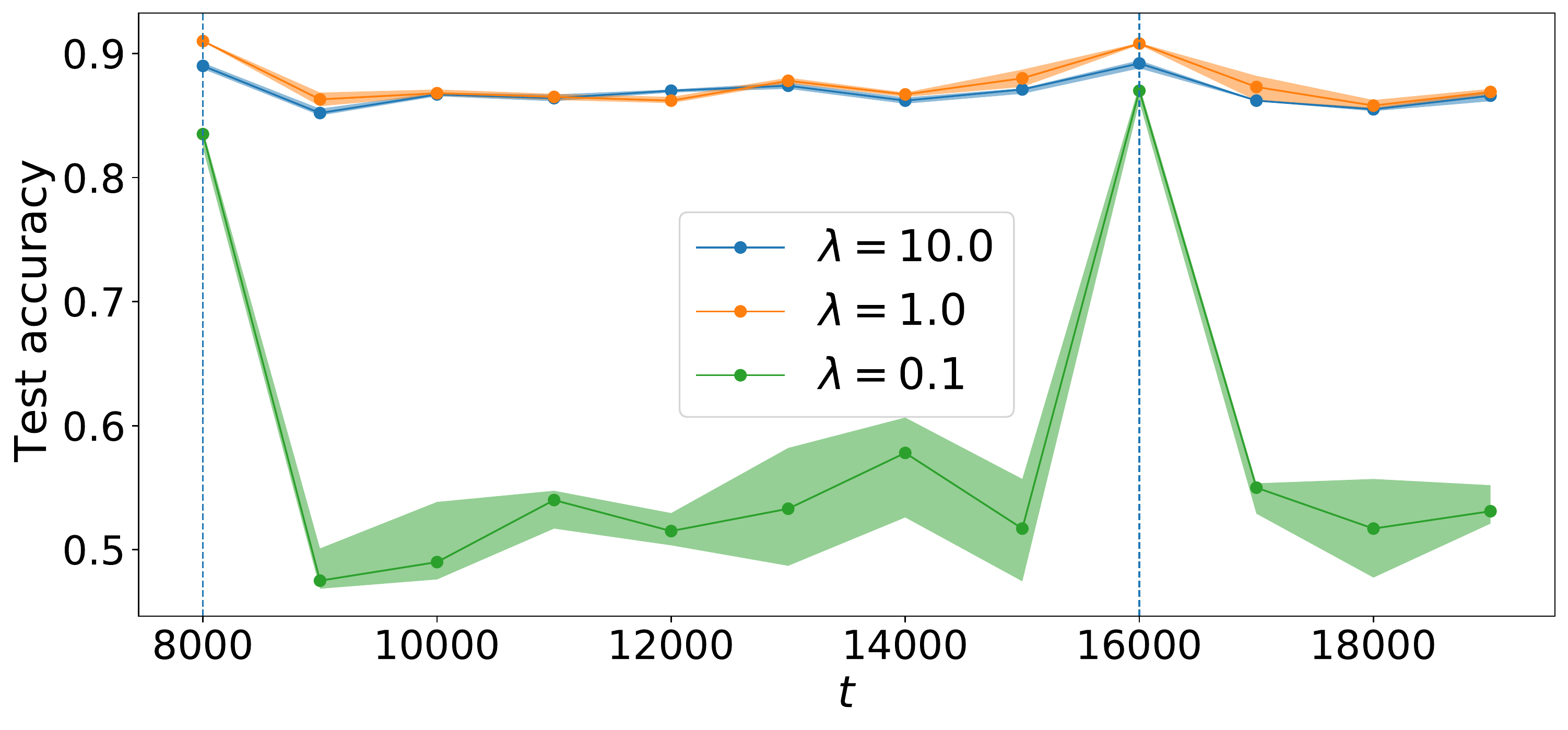}\\
(b) \\
 \caption{Results using MNIST dataset applying the basic continual release algorithm varying $\epsilon$ and $\lambda$. In (a) $\lambda=10$ and in (b) $\epsilon=1$. The $B$ and $b_0$ are set to $1000$ and $8 \times b_0$ in both the cases.}
\label{fig:continual-release-basic}
\end{figure}


\end{document}